\DeclareMathOperator*{\argmin}{arg\,min}
\theoremstyle{plain}
\newtheorem{thm}{Theorem}
\newtheorem{problem}{Problem}
\newtheorem{corollary}{Corollary}
\newtheorem{assum}{Assumption}
\newcommand{\bu}{\mathbf{u}}
\newcommand{\bx}{\mathbf{x}}
\newcommand{\bz}{\mathbf{z}}
\newcommand{\bxi}{\mathbf{\xi}}
\newcommand{\belief}{\mathbf{b}}
\newcommand{\bs}{\mathbf{s}}
\newcommand{\bm}{\mathbf{m}}
\newcommand{\bn}{\mathbf{n}}
\acrodef{MPC}[MPC]{Model Predictive Control}
\acrodef{POMDP}[POMDP]{Partially Observable Markov Decision Process}
\acrodef{EKF}[EKF]{Extended Kalman Filter}
\acrodef{iLQG}[iLQG]{iterative linear-quadratic Gaussian control}
\acrodef{iLQR}[iLQR]{iterative linear-quadratic regulator control}
\acrodef{DDP}[DDP]{Differential Dynamic Programming}
\begin{document}

\title{Stochastic Dynamic Games in Belief Space}

\author{Wilko~Schwarting,
        Alyssa~Pierson,
        Sertac~Karaman, 
        and~Daniela~Rus%
\thanks{
This work is supported in part by NSF Grant 1723943, the Office of Naval Research (ONR) Grant N00014-18-1-2830, and Toyota Research Institute (TRI). TRI provided funds to assist the authors with their research but this article solely reflects the opinions and conclusions of its authors and not TRI or any other Toyota entity. (Corresponding author: Wilko Schwarting.)

W. Schwarting, A. Pierson, and D. Rus are with the Computer Science  and Artificial Intelligence Laboratory (CSAIL), Massachusetts Institute of Technology, Cambridge, MA 02139 USA (e-mail: \{wilkos, apierson, rus\}@csail.mit.edu).

S. Karaman is with the Laboratory of Information and Decision  Systems (LIDS), Massachusetts Institute of Technology, Cambridge, MA 02139 USA (e-mail:sertac@mit.edu).}%
}

\maketitle

\begin{abstract}
Information gathering while interacting with other agents under
sensing and motion uncertainty is critical in domains
such as driving, service robots, racing, or
surveillance. 
The interests of agents may be at odds with
others, resulting in a stochastic non-cooperative dynamic game.
Agents must predict others' future actions without
communication, incorporate their
actions into these predictions, account for uncertainty and noise in information gathering, and consider what information their actions reveal.
Our solution uses local iterative dynamic programming in Gaussian belief space to solve a game-theoretic continuous POMDP.
Solving a quadratic game in the backward pass of a game-theoretic belief-space variant of iLQG achieves a runtime polynomial in the number of agents and linear in the planning horizon.
Our algorithm yields linear feedback policies for our robot, and predicted feedback policies for other agents.
We present three applications: active surveillance, guiding eyes for a blind agent, and autonomous racing.
Agents with game-theoretic belief-space planning win 44\% more races than without game theory and 34\% more than without belief-space planning.
\end{abstract}

\begin{IEEEkeywords}
Motion and Path Planning, Optimization and Optimal Control, Multi-Robot Systems, Game-Theoretic Planning
\end{IEEEkeywords}

\IEEEpeerreviewmaketitle

\section{Introduction}
\label{sec:introduction}
\IEEEPARstart{W}{e} aim to develop planners for multi-agent systems that are robust under uncertainty and combine information-seeking behavior with game-theoretic reasoning. 
While game theory can  model  the  interaction  and
dependency  among  agents,  it  does  not  address  the  quality  of
the information available to the agent for decision making.
Agents must plan and act within a game, remain robust to uncertainty, gain information, and leverage the information gain to improve their control policies.
We propose an approach that combines game-theoretic planning with belief-space planning, leveraging the interaction models from game theory while incorporating uncertainties in the modeled dynamics and perception. 
In multi-agent systems, we find that agents gather information to reduce uncertainty while maintaining decision-making strategies that support complex interactions.
Applications include assistive robotics, surveillance, pursuer-evader games, and racing.

\begin{figure}
    \centering
        \includegraphics[width=1.0\columnwidth]{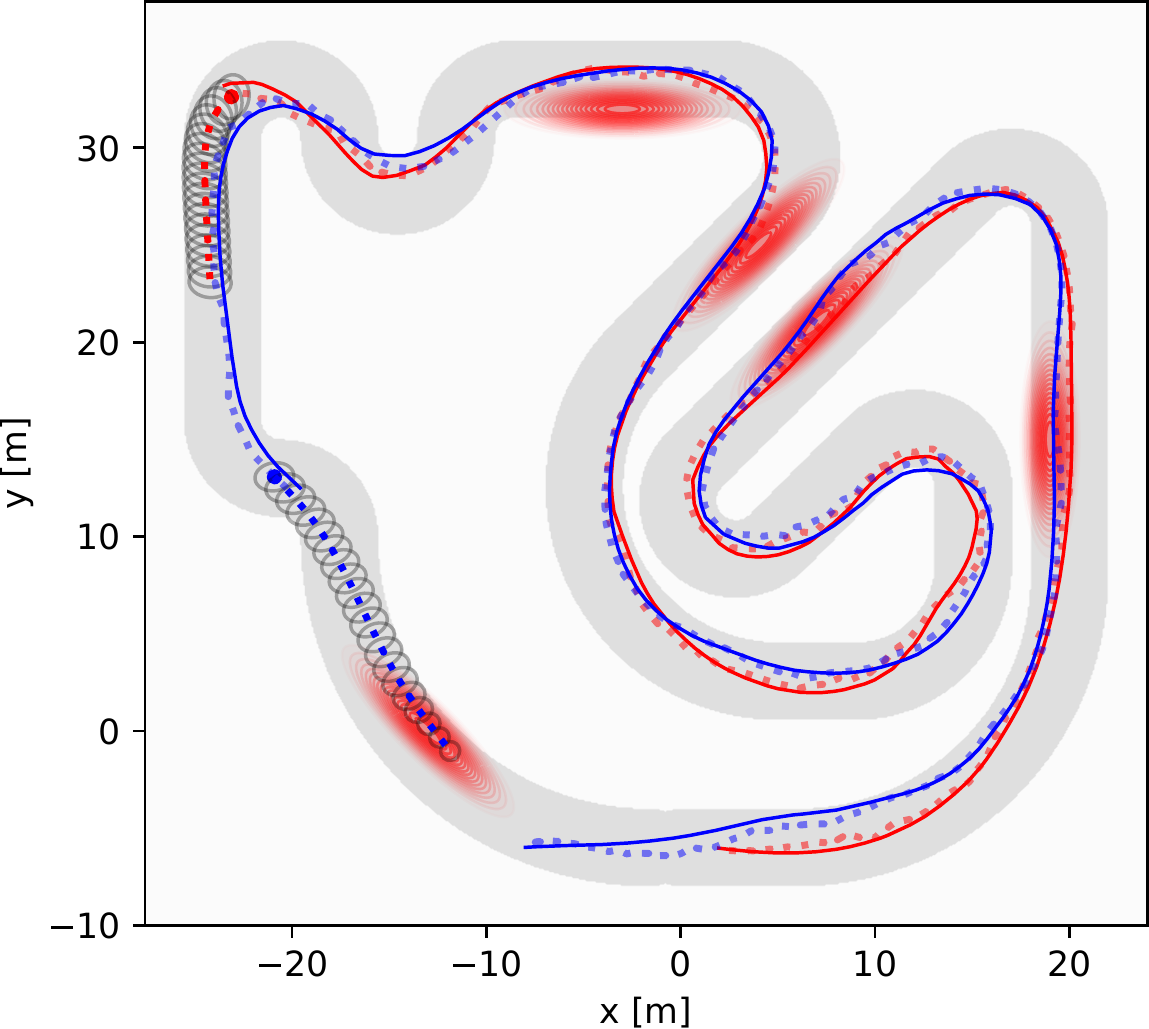}
    \caption{One application we present is dynamic racing. Here, the blue agent starts with a disadvantage but is equipped with better acceleration and capable of moving faster through corners than the red agent. Our approach allows the blue agent to overtake and win the race. Planned trajectories and chance constraints are shown in dashed lines and ellipses. The traces correspond to the true state (solid) and the noisy EKF-estimate (dashed) available to each agent during the race. The red areas are zones with low noise observations and reduce uncertainty.}
    \label{fig:race_big}
\end{figure}

While each agent operates independently and does not reveal plans or intentions through communication, agents have approximate models of the other agents. Dynamic models allow us to infer the ability of other agents to move in the environment, approximate cost models encode the agents' objectives. Models of how other agents perceive the world from observations allow estimating what other agents know and with what certainty. Our work is related to model predictive receding horizon planners: agents start from an initial belief about themselves, others, and the environment and imagine how the future will evolve if they and other agents were to execute certain actions. These models can be prescribed, observed, sensed, or communicated. As plans are not shared among agents, we propose a game-theoretic framework that predicts the interactive policies of other agents while optimizing for our own policy.

Within the game-theoretic framework, agents take actions that increase their information gain, which in turn results in the ability to improve their control policies with reduced uncertainty. For example, an assistive robot tasked with guiding a human may explore the environment to reduce uncertainty and better navigate. Conversely, a game-theoretic setting can model adversarial agents. Here, agents may choose to \emph{hide} to prevent others from gathering information about themselves, which is relevant to surveillance applications. In the context of racing, agents may force others to increase their uncertainty, such as by pressuring them to drive too fast in a corner which increases uncertainty in their state, or by simply pushing them into the dark. It is therefore not only important to reason about a robot's own uncertainty but also the uncertainty of other agents in the environment, and even more so how one's actions impact the change in the uncertainty of others.

Game-theoretic models have not only proven useful to model interactions between autonomous systems, but also in integrating interactive human predictions into autonomous decision-making and planning. We can model the actions of humans as expected cost-minimizing and estimate human cost functions from past observed trajectories with Inverse Reinforcement Learning (IRL)~\cite{ziebart2008maximum}. Consequently, computing expected cost-minimizing actions based on the learned cost functions generates human predictions. The expected cost-minimizing behavior can also be interpreted as the best response to an autonomous agent's actions. This best response setting allows us to estimate how the autonomous system's actions influence human actions. The autonomous system can therefore implicitly control the human's actions to a certain degree. This technique has been applied to predict interactive human behavior for autonomous vehicles \cite{sadigh2016planning, kuderer2015learning, schwartingSVO}, and to predict pedestrians~\cite{kretzschmar2014learning}. The combination of game-theoretic modeling of human behavior and information-seeking planning is therefore even more promising.

For instance, home service robots can provide assistance and support to humans, particularly the elderly population. These robots need to work near humans, gauge the human's intent, and understand the state of mind of others to better perform tasks. They have to avoid confusion and misunderstandings and will need to seek information about both their environment and surrounding humans. Additionally, these autonomous systems need to also reason about the amount of information and understanding the human has about the robot. The robot can aid the human's understanding through explicit communication, as well as implicitly through behavior, such as moving to visible locations or indicating intent by unambiguously moving in the desired direction.

We propose a solution that combines multi-agent game-theoretic decision-making under uncertainty and belief-space planning (BSP). Our approach supports robust solutions to a wide range of multi-robot applications in which dealing with uncertainty, the need to gather information, and game-theoretic decision-making are fundamental.
We build on important advancements in two areas: game-theoretic planning and belief-space planning. Game-theoretic planning successfully solves problems where an agent's objective is at odds with the objective of other agents, such as in modeling human behavior in traffic~\cite{li2018game, schwartingSVO, schwarting2018planning}, and leveraging the effects on humans by autonomous cars~\cite{sadigh2016planning}. \cite{marden2018game} gives a recent review on game theory and control. In game theory, the Nash equilibrium is a proposed solution of a non-cooperative game involving two or more players. Each player is assumed to know the equilibrium strategies of the other players, and no player has anything to gain by changing only their own strategy. Solving for Nash equilibria has been applied to competitive racing~\cite{williams2018best,liniger2017non,spica1801real} and guiding vehicles through intersections \cite{dreves2018generalized}. Solution methods include Iterated Best Response \cite{williams2018best,spica1801real, Wang2019, Wang2020}, iterative quadratic approximations~\cite{fridovich2020efficient, cleac2019algames}, using discrete payoff matrices \cite{liniger2017non}, or solving the necessary conditions~\cite{schwartingSVO}.
We will solve the necessary condition of a static quadratic game at each stage in the backward-pass of \ac{iLQG} to solve for the Nash equilibrium of the dynamic game.

While game-theoretic planning models the interaction and dependency among agents, it does not address the quality of information available to the agent for decision making. Belief-space planning \cite{kaelbling1998planning} uses beliefs, which are the distribution of the robot's state estimate, to represent the uncertainties in the perception of the robot. The problem of computing a control policy over the space of belief states is formally described as a Partially Observable Markov Decision Process (POMDP), and has been studied extensively. Solutions to POMDPs are known to be very complex. Solving a POMDP to global optimality is NP-hard: solutions such as point-based algorithms \cite{bai2014integrated, kurniawati2008sarsop, pineau2003point, hansen2004dynamic} in discrete space are bound to the curse of history, as well as sampling based solvers \cite{bry2011rapidly, patil2012estimating, prentice2009belief}. Optimization-based approaches have been developed for planning in continuous belief space~\cite{lee2013sigma, patil2015scaling, platt2010belief, van2012motion, sun2016stochastic}, by approximating beliefs as Gaussian distributions and computing a value function valid in local regions of the belief space. 
Similarly to \cite{van2012motion, sun2016stochastic}, we avoid the common maximum-likelihood observation assumption \cite{platt2010belief, patil2015scaling}.
In comparison to point-based algorithms which scale exponentially in the planning horizon $l$, optimization-based methods scale linearly, $\mathcal{O}(l)$. 

\subsection{Main Assumptions}

To generate reasonable predictions about other agents, we build on approximate prior information. Consider the analogy of a race car driver. A driver knows that other race cars will have comparable driving characteristics, while different classes of vehicles, like trucks, will have different handling dynamics. They also know that the other racing drivers desire to go as fast as possible around the track to win the race, without crashing into other vehicles or sliding off the road, similar to a cost model.
Lastly, they have experience in how other drivers observe the track and that the quality of perception decreases in the dark. For our autonomous systems, Assumption~\ref{assum:CommKnow} describes that we assume \emph{common knowledge} of models about the world.
\begin{assum}[Common Knowledge] \label{assum:CommKnow}
Agents have models for cost, dynamics, and observations of other agents.
\end{assum}
Related game-theoretic works, with applications ranging from pedestrian-robot interactions to autonomous racing, make similar assumptions by either prescribing dynamics and cost models~\cite{li2018game, williams2018best, liniger2017non, spica1801real} or learning cost models through IRL~\cite{sadigh2016planning, kretzschmar2014learning, schwartingSVO}.
Assumption~\ref{assum:CommKnow} allows agents to imagine how the future will evolve: If the robot and other agents were to execute given policies, how would model-based predictions of motions and observations in the world impact the beliefs over time? 
We do not assume any form of direct communication between agents and therefore do not have access to the policies of other agents. Instead, we predict interactive policies of other agents through game-theory and leveraging the models of costs, dynamics, and observations. The robot, as part of the game, can then leverage the influence of its actions on the predicted actions of other agents to their advantage.
We will show in a competitive racing example that Assumption~\ref{assum:CommKnow} can be relaxed in practice and that approximate models of other agents prove sufficient to improve performance.

We refer to beliefs as distributions over states and draw inspiration over how we design our system from the cognitive theory of mind. The cognitive theory of mind~\cite{scassellati2002theory} defines the ability to attribute mental states, such as beliefs, intents, or desires to oneself and others. It is integral to understanding that others have beliefs that are different from one's own.
Single-agent planning in belief space, reasoning about the uncertainty of only the own state, is limited to zero-order beliefs (e.g. I think...).
In contrast, we will also reason about the uncertainty of other agents. The theory of mind refers to this as first-order belief spaces (e.g., I think they think...).
Higher-order beliefs such as second-order belief spaces (e.g., I think they think that I think...) are beyond the scope of this paper as they quickly become computationally intractable by essentially defining beliefs over beliefs. We find that parametrizing belief spaces efficiently is essential to generating real-time capable algorithms. Assumption~\ref{assum:FirstOrder}, keeps computation complexity at a reasonable level and avoids an explosion in parameters in the recursive beliefs over beliefs. 
\begin{assum}[First-Order Beliefs]
\label{assum:FirstOrder}
Planning and prediction are limited to first-order beliefs: any robot $i$’s belief over another agent $j$ is the same as that agent $j$’s belief about themselves.
\end{assum}
In Section~\ref{sec:case-studies}, we evaluate cases, such as competitive racing, where this assumption is a simplification of the true system dynamics. In the racing scenario, all agents execute separate instances of our algorithm and therefore maintain separate beliefs. Thus, an agent's belief about themselves does not necessarily match the beliefs that others have about them. However, while these belief mismatches may occur, we see performance improvements over a game-theoretic baseline without belief-space planning, see Section~\ref{sec:autonomous_racing}, which highlights the importance of accounting for uncertainty and information gain in competitive racing and other applications.

The purpose of Assumptions~\ref{assum:CommKnow} and~\ref{assum:FirstOrder} is to enable interactive predictions of other agents in belief space while maintaining computational tractability. Since our approach is executed continuously in a receding horizon fashion, and we compute policies that are reactive to deviations from the predicted beliefs, the proposed method can adapt if the observed behavior differs from the predicted ones. Our approach continues to successfully control the agent under reasonable violations of the presented assumptions,
such as if the dynamics, observation, or cost models are inaccurate, if their own beliefs do not exactly match the beliefs of others, or if the other agent's optimization is sub-optimal.

\subsection{Contributions}

We present a computationally-tractable solution to multi-agent planning that combines game-theoretic planning and belief-space planning to interact within a problem formulated as a game, gain information, and leverage the information gain to improve the agents' control policies. The main limiting factor in applying either game theory or belief-space planning, and even more so the combination of both to robotic control problems lies in the associated computational complexity. To the best of our knowledge, this is the first work to combine general dynamic games and planning in belief space into an efficient real-time algorithm.
The main contributions of this paper are:
\begin{enumerate}
    \item A method for computing Nash equilibria for dynamic games in belief space;
    \item A linear feedback policy, similar to linear-quadratic Gaussian control (LQG), for the robot resulting from the solution, and also a predicted linear feedback policy for all other agents;
    \item Belief and control trajectory based regularization to ensure convergence;
    \item Evaluation of the proposed method in three stochastic dynamic games: racing with autonomous vehicles, active surveillance, and guiding eyes for a blind agent. 
\end{enumerate}{}

We organize the remainder of the paper as follows: Section~\ref{sec:formulation} introduces dynamic games in belief space, including a general definition of best response POMDPs and a Nash equilibrium formulation of the non-cooperative dynamic game. We give the resulting problem definition in Section~\ref{sec:problem} and, assuming beliefs can be represented in the form of Gaussian distributions, approximate the belief dynamics based on an \ac{EKF} detailed in Section~\ref{sec:ekf}.
Our method computes a locally-optimal solution to the best response POMDP problem with continuous state and action spaces and non-linear dynamics and observation models by iteratively solving for a local Nash equilibrium, outlined in Section~\ref{sec:approach}. We utilize a belief-space variant of \ac{iLQG} to compute the Nash equilibrium, Section~\ref{sec:iter_dyn_prog}, by solving for a local Nash equilibrium at each stage of the backward pass, see Section~\ref{sec:nash_equilibrium}. At each iteration, each agent's value function is approximated based on a quadratization around a nominal trajectory, and the belief dynamics are approximated with an extended Kalman filter. We describe regularization techniques in Section~\ref{sec:regularization} to ensure that the algorithm converges regardless of initial conditions.
Based on these findings, we introduce Algorithm~\ref{alg:nash_alg} in Section~\ref{sec:alg_dyn_game} describing the full belief-space Nash equilibrium computation.

We show the potential of our approach in Section~\ref{sec:case-studies} by presenting three multi-agent problems that combine our game-theoretic formulation with information-seeking behavior: active surveillance, guiding blind agents, and racing with autonomous vehicles. 

\section{Dynamic Games in Belief Space}
\label{sec:formulation}
\acrodef{MPC}[MPC]{Model Predictive Control}
\begin{table}
\caption{Main symbols and Notation}
\centering
{\renewcommand{\arraystretch}{1.2}
\begin{tabular}{l | p{0.7\columnwidth}}
$\bx, \bu, \bz$ & State, control input, and measurement\\
$\belief, \,\, \bs = [\belief^\top, \bu^\top]^\top$ & Belief, short for belief and controls\\
$Q^i, V^i$ & Action-value and value function of agent $i$ \\
$\pi^i$ & Optimal control policy of agent $i$ \\
$j_k$, $K_k$ & Feedforward and feedback gains at time $k$\\
$c_k^i(\belief_k, \bu_k)$,  $c_l^i(\belief_l)$ & Cost of agent $i$ at time $k$, and terminal cost\\
$\bx_{k+1} = f(\bx_k, \bu_k, \bm_k)$ & State transition with process noise $\bm_k$ \\
$\bz_k = h(\bx_k, \bn_k)$ & Measurement function with meas. noise $\bn_k$ \\
$\belief_{k+1} = \beta(\belief_k, \bu_k, \bz_{k+1})$ & Belief transition \\
$\belief = \bar{\belief} + \delta \belief$ & Nominal + perturbation, similar for $\bu, \bs$ \\
$c^i_{\bs,k}$, $c^i_{\bs\bs,k}$ & Gradiant and Hessian of $c^i$ evaluated at $\bar{\bs}_k$ \\
$g_{\bs,k}$, $W_{\bs,k}$ & Jacobians of $g$ and $W$ evaluated at $\bar{\bs}_k$ \\
$V_{\belief,k}$, $V_{\belief\belief,k}$ & Gradient and Hessian of value at time $k$ \\
$Q_{\bs,k}$, $Q_{\bs\bs,k}$ & Gradient and Hessian of action-value at $k$
\end{tabular}
}
\end{table}

We first define POMDPs in their most general form (following notation of \cite{thrun2005probabilistic, van2012motion}), then formulate the resulting game, derive the Nash Equilibrium, and present an iterative solution method.

We write the belief-space planning problem as a stochastic optimal control problem. 
Consider a system of $N$ agents $i \in \lbrace 1, ... , N \rbrace$, with agent $i$'s state at time $k$ denoted $\bx_k^i \in  \mathrm{R}^{n_{\bx^i}}$, measurement as $\bz_k^i \in \mathrm{R}^{n_{\bz^i}}$, and control input $\bu_k^i \in \mathrm{R}^{n_{\bu^i}}$. Here, $n_{\bx^i}$, $n_{\bz^i}$, $n_{\bu^i}$ define the dimensionality of agent $i$'s state, measurement, and control.
For brevity we refer to $\bx_k = [\bx_k^{1,\top}, \dots, \bx_k^{N,\top}]^\top  \in \mathrm{R}^{n_{\bx}} $ as the joint state, $\bz_k = [\bz_k^{1,\top}, \dots, \bz_k^{N,\top}]^\top  \in \mathrm{R}^{n_{\bz}} $ as the joint measurement,
and $\bu_k = [\bu_k^{1,\top}, \dots, \bu_k^{N,\top}]^\top  \in \mathrm{R}^{n_{\bu}} $ as the joint control, consisting of all agents. We refer to the joint dimensions as $n_{\bx} = \sum_i n_{\bx^i}$, $n_{\bz} = \sum_i n_{\bz^i}$, and $n_{\bu} = \sum_i n_{\bu^i}$. 
The notation ${\neg i}$ indicates all agents except $i$, e.g. $\bu_k^{\neg i}$ relates to the controls of all other agents except $i$.
We will refer to $\bu = [\bu_0, \bu_1, \dots, \bu_{l-1}]$ as the control trajectory until time $l$. The joint \emph{belief} $\belief(\bx_k)$ is defined as the distribution of the state $\bx_k$ given all past control inputs and sensor measurements, and consists of individual beliefs $\belief^i$. For brevity, we define $\bs = [\belief^\top, \bu^\top]^\top$. 

Following~\cite{thrun2005probabilistic, van2012motion}, we compute the belief by
\begin{equation}
    \belief(\bx_k) = p(\bx_k|\bu_0, \dots,\bu_{k-1}, \bz_1, \dots, \bz_k),
\end{equation}
from all past control inputs and sensor measurements.
The stochastic dynamics and observation model, here formulated in probabilistic notation as
\begin{equation}
    \bx_{k+1} \sim p(\bx_{k+1} | \bx_k, \bu_k),  \,\,\,\,\, \bz_k \sim p(\bz_k|\bx_k),
\end{equation}
allow us to forward propagate the belief given a control input $\bu_k$ and a measurement $\bz_{k+1}$ through Bayesian filtering:
\begin{equation}
    \belief(\bx_{k+1}) = \eta p(\bz_{k+1}|\bx_{k+1}) \int p(\bx_{k+1}|\bx_k, \bu_k)\belief(\bx_k) \mathrm{d} \bx_k.
    \label{eq:bayes_filter}
\end{equation}
In \eqref{eq:bayes_filter}, $\eta$ is a normalizer independent of $\bx_{k+1}$ and $\belief(\bx_{k+1})$ and, contains the uncertainty originating from the stochastic dynamics, the uncertain measurement and the uncertainty in the belief at the previous time step. We employ the shorthand $\belief_k$ to refer to $\belief(\bx_k)$. The stochastic \emph{belief dynamics} are defined by \eqref{eq:bayes_filter} and are written as
\begin{equation}
    \belief_{k+1} = \beta(\belief_k, \bu_k, \bz_{k+1}).
    \label{eq:belief_dynamics}
\end{equation}

The expected return of each individual agent $i$ under a control trajectory of all agents $\bu$, including its own control trajectory $\bu^i$, subject to uncertainty on the observed measurements $\bz$ over the horizon $l$ is determined by the action-value function $Q^i$, defined
\begin{equation}
        Q^i(\belief_0, \bu) = \mathop{\mathbb{E}}_{\bz}\left[c_l^i(\belief_l) + \sum_{k=0}^{l-1} c^i_k(\belief_k, \bu_k) \right]     \label{eq:best_response_pomdp}.
\end{equation}
Here $c_k^i(\cdot)$ denotes the cost at time $k$ and $c_l^i(\cdot)$ denotes the terminal cost of agent $i$.
Since there exists an action-value function for each agent, there are $N$ distinct action-value functions $Q^i$ for $i \in \lbrace 1, ... , N \rbrace$.

We will first formulate the two problems of (1) solving the general POMDP best response game, and then (2) finding the Nash equilibrium of this game.

\begin{problem}
\textbf{POMDP Best Response Game:} Given an initial belief~$\belief_0$, for agents $i \in \lbrace 1, ..., N \rbrace$, we need to solve the stochastic optimal control problem
\begin{align}
    \pi^i = & \argmin_{\bu^i} \,\, 
    Q^i(\belief_0, \bu)  \label{eq:best_response_pomdp} \,\,\,\,\, \forall i \in \lbrace 1, ..., N \rbrace  \\ &
 s.t. \, \, \belief_{k+1} = \beta(\belief_k, \bu_k, \bz_{k+1}) , 
\end{align}
for each agent by minimizing each agent's expected cost with respect to their own controls $\bu^i$, where $Q^i(\belief_0, \bu)$ is the action-value function of agent $i$.
\label{problem1}
\end{problem}
Note that all agents' optimal policies $\pi^i$ depend on the actions of all other agents because each agent $i$ minimizes their own action-value function $Q^i(\belief_0, \bu)$. The result is a non-cooperative game~\cite{basar1999dynamic} in which all agents' policies depend on the optimal policies of all other agents $\pi^i(\pi^{\neg i})$. 
Since all policies are optimized jointly and severally, the dependence of agent $i$'s policy $\pi^i$ on other agents' controls $\bu^{\neg i}$ is resolved by inserting their optimal policy $\pi^{\neg i}$. We therefore denote $\pi^i$ instead of $\pi^i(\bu^{\neg i})$.

A general solution to \eqref{eq:best_response_pomdp} can be defined recursively by the Bellman equation:
\begin{align}
    V_l^i(\belief_l) & = c^i_l(\belief_l)    , \label{eq:Bellman} \\
    Q_k^i(\belief_k, \bu_k) & = c^i_k(\belief_k, \bu_k) +  \mathop{\mathbb{E}}_{\bz_{k+1}} \left[V_{k+1}^i(\beta(\belief_k, \bu_k, \bz_{k+1})) \right], \nonumber \\
    V_k^i(\belief_k) & = \min_{\bu^i_k} Q_k^i(\belief_k, \bu_k), \nonumber \\
    \pi_k^i(\belief_k) & = \argmin_{\bu^i_k} Q_k^i(\belief_k, \bu_k), \nonumber
\end{align}
where $V_k^i(\belief_k)$ is the value function and $\pi_k^i(\belief_k)$ the optimal policy at time $k$. Note that in $\eqref{eq:Bellman}$ the cost $c^i_k(\belief_k, \bu_k)$, the reached value function $V_{k+1}^i(\beta(\belief_k, \bu_k, \bz_{k+1}))$, and therefore the action-value function $Q_k^i(\belief_k, \bu_k)$ of agent $i$ depends not only on its own action but also on all other players' actions. This interdependence is analogous to \eqref{eq:best_response_pomdp} but formulated recursively over time.

To better capture how an agent’s action-value function depends on the controls of all other actions, we can equivalently write $Q^i(\belief_0, \bu) = Q^i(\belief_0, \bu^i, \bu^{\neg{i}})$.
More precisely, the interdependence of all players optimal policies is captured in the Nash equilibrium of Problem~\ref{problem1}, defined in Problem \ref{problem2}. Problem~\ref{problem2} formulates a sufficient condition for Nash equilibria~\cite{basar1999dynamic, osborne1994course} in belief space.
\begin{problem}
$\textbf{Nash Equilibrium:}$
Find the optimal control policy $\pi = [\pi^{1,\top}, \dots, \pi^{N,\top}]^\top$ that yields a local Nash equilibrium of the POMDP Best Response Game in Problem~\ref{problem1}, such that it satisfies
\begin{align}
\label{eq:problem2}
Q^{i}(\belief_0, \bu^i, \pi^{\neg i}) \geq Q^{i}(\belief_0,\pi^i, \pi^{\neg i}), \forall i \in \{1,2,\dots,N\},
\end{align}
for all $\bu^i$ in the neighborhood of $\pi^{i}$.
\label{problem2}
\end{problem}
More intuitively, in the Nash equilibrium no player has anything to gain by changing only their own strategy.
Based on the necessary condition of Problem~\ref{problem2}, we will derive a local necessary condition for each sub-problem in the backward pass of our game-theoretic variant of belief \ac{iLQG}.

\subsection{Problem Formulation}
\label{sec:problem}

The difficulty in solving POMDPs stems from the infinite-dimensional space of all beliefs, and that in general the value function cannot be expressed in parametric form. To overcome these challenges we describe beliefs by Gaussian distributions, approximating the belief dynamics using an \ac{EKF}, and a quadratic approximation of the value function about a nominal trajectory through the belief space. We iteratively compute a local Nash equilibrium over all agents in the proximity of the nominal trajectory by solving the necessary condition ~\eqref{eq:problem2} of Problem~\ref{problem2} at each timestep during a belief-space variant of \ac{iLQG} to perform the Bellman backward recursion in \eqref{eq:Bellman}. Due to its similarity to \ac{iLQG} we benefit from linear scaling $\mathcal{O}(l)$ in the planning horizon $l$, in contrast to point-based POMDP algorithms which scale exponentially.

We are given non-linear stochastic dynamics and observation models in state-transition notation:
\begin{align}
    \bx_{k+1} &= f(\bx_k, \bu_k, \bm_k), & \bm_k \sim \mathcal{N}(0,I), \\ \bz_k & = h(\bx_k, \bn_k) & \bn_k \sim \mathcal{N}(0,I), 
\end{align}
where $\bm_k$ and $\bn_k$ are the motion and measurement noise, respectively. 
Without loss of generality, we draw both the motion and measurement noise from independent Gaussian distributions with zero mean and unit variance since the noise can be arbitrarily transformed inside these functions. Depending on the system, motion and sensing noise may be state and control dependent. 

Note that formulating the general dynamics and measurement functions jointly of all agents includes, but is not limited to, the special case of independent functions for each agent $i$ as in 
\begin{multline}
    f(\bx_k, \bu_k, \bm_k) = [f^1(\bx_k^1, \bu_k^1, \bm_k^1)^{\top}, \dots, \\ f^N(\bx_k^N, \bu_k^N, \bm_k^N)^{\top}]^\top ,
\end{multline}
\begin{multline}
    h(\bx_k, \bn_k) = [h^1(\bx_k^1, \bn_k^1)^{\top}, \dots, h^N(\bx_k^N, \bn_k^N)^{\top}]^\top.
\end{multline}

We define the Gaussian belief as $\belief_k = (\hat{\bx}_k^\top, \Sigma_k)$, 
by the mean state $\hat{\bx}_k$ and the variance $\Sigma_k$ of the Normal distribution describing the stochastic state $\bx_k \sim \mathcal{N}(\hat{\bx}_k, \Sigma_k)$.

\section{Technical Approach}
\label{sec:approach}

Before detailing the value iteration method for the Nash equilibrium solution based on a game-theoretic belief-space variant of \ac{iLQG} in Section~\ref{sec:iter_dyn_prog}, we need to derive two important components.
First, we describe the approximation of the general Bayesian filter update~\eqref{eq:belief_dynamics} by an \ac{EKF} in Section~\ref{sec:ekf} to formulate the Gaussian belief dynamics. This allows us to forward propagate Gaussian beliefs given an initial belief and a control trajectory which we utilitze in the game-theoretic variant of belief-space \ac{iLQG}. Second, we show that the necessary condition of Problem~\ref{problem2}, the Nash equilibrium, is equivalent to a local necessary condition at each timestep in the Bellman recursion in Section~\ref{sec:nash_equilibrium}.
The full algorithm is detailed in Section~\ref{sec:alg_dyn_game}.

\subsection{Bayesian Filter and Belief Dynamics}
\label{sec:ekf}
The Bayesian filter in \eqref{eq:belief_dynamics} defines the general belief dynamics of a current belief $\belief_k$ and measurement $\bz_{k+1}$. To make the belief propagation tractable we follow \cite{van2012motion} and approximate the Bayesian filter by an \ac{EKF}, suitable for non-linear Gaussian beliefs as well as non-linear dynamics and measurement models. For well-defined transition models, the \ac{EKF} is the standard for nonlinear state estimation~\cite{wan2006sigma, julier2004unscented}.
The \ac{EKF} makes a first-order approximation of $f$ with respect to the stochastic variable $\bx$,
such that for a given belief $\belief_k = (\hat{\bx}_k, \Sigma_k)$ we have the standard \ac{EKF} update equations \cite{van2012motion, thrun2005probabilistic}
\begin{align}
    \hat{\bx}_{k+1} & = f(\hat{\bx}_k, \bu_k, 0) + K_k(\bz_{k+1} - h(f(\hat{\bx}_k, \bu_k, 0), 0)), \nonumber \\
    \Sigma_{k+1} & = \Gamma_{k+1} - K_k H_k \Gamma_{k+1}, 
\end{align}
with corresponding matrices defined by
\begin{align}
    \Gamma_{k+1} & = A_k \Sigma_k A_k^T + M_k M_k^T, \\
    K_k & = \Gamma_{k+1} H_k^\top (H_k \Gamma_{k+1} H_k^\top + N_k N_k^\top)^{-1}, \nonumber \\
    A_k & = \frac{\partial f}{\partial \bx}(\hat{\bx}_k, \bu_k, 0),  \,\,\, M_k  = \frac{\partial f}{\partial \bm}(\hat{\bx}_k, \bu_k, 0), \nonumber \\
    H_k & = \frac{\partial h}{\partial \bx}(f(\hat{\bx}_k, \bu_k, 0),0),  \,\,\,
    N_k = \frac{\partial h}{\partial \bn}(f(\hat{\bx}_k, \bu_k, 0),0). \nonumber
\end{align}
The noisy measurement $\bz_k$ in the belief update makes the belief dynamics stochastic.
We define $\belief_k = [\hat{\bx}_k^\top, \text{vec}(\Sigma_k)^\top]^\top$, where $\text{vec}(\Sigma_k)$ is the matrix $\Sigma_k$ reshaped into vector form and formulate the stochastic belief dynamics
\begin{align}
    \belief_{k+1} = g(\belief_k, \bu_k) + W(\belief_k, \bu_k)\bxi_k, \,\,\, \bxi_k \sim \mathcal{N}(0,I),
\end{align}
with
\begin{align}
    g_k(\belief_k, \bu_k) &= \begin{bmatrix}f(\hat{\bx}_k, \bu_k, 0) \\ \text{vec}(\Gamma_{k+1} - K_k H_k \Gamma_{k+1}) \end{bmatrix}, \\
    W_k(\belief_k, \bu_k) &= \begin{bmatrix}\sqrt{K_k H_k \Gamma_{k+1}} \\ \mathbf{0}
    \end{bmatrix}.\label{eq:W_noise}
\end{align}
Here, $\bxi_k$ is a Gaussian with dimension $n_{\bx}$ that is applied to the stochastic part of $\belief_k$, i.e. the stochastic state variable $\bx_k$.
In this form $\bxi_k$ represents both measurement noise $\bn_k$ and motion noise $\bm_k$ mapped onto the belief transition.
The stochastic Gaussian belief dynamics allow us to propagate beliefs efficiently during the forward pass of the game-theoretic variant of belief-space \ac{iLQG}.

\subsection{Nash Equilibrium Necessary Condition}
\label{sec:nash_equilibrium}
While formulating how to propagate uncertainty for the continuous POMDP, we also need to define a tractable procedure to solve for Nash equilibria.
One common method to solve for Nash equilibria is the method of Iterated Best Response \cite{williams2018best, spica1801real}, where control policies are exchanged after each agent's separate and independent optimization iteration. In contrast, we directly integrate the necessary condition of the Nash equilibrium into the backward pass of a belief space variant of \ac{iLQG}. Specifically, we solve a quadratic game at each stage of the backward pass with a unique solution.
First, we formulate the necessary condition of Problem~\ref{problem2} as
\begin{align}
	\frac{\partial Q^{i}(\belief_0, \bu)}{\partial \bu^i} = 0, \,\,\, \forall i \in \{1,2,\dots,N\},
	\label{eq:neccessary_condition}
\end{align}
which allows us to compute local Nash equilibria by solving \eqref{eq:neccessary_condition}. 
Theorem~\ref{theorem1} states an equivalent condition for $Q_k^i(\belief_k, \bu_k)$, the value function from time $k$ to $l$, defined in the Bellman recursion \eqref{eq:Bellman}.
\begin{thm}
The necessary condition of the local Nash equilibrium \eqref{eq:neccessary_condition} is equivalent to
\begin{equation}
    \frac{\partial Q_k^i(\belief_k, \bu_k)}{\partial \bu_k^i} = 0,
    \label{eq:recur_necessary_condition}
\end{equation}
for all $i \in \{1,2,\dots,N\},$ and $k \in \{0, 1,\dots,{l-1}\}$.
\label{theorem1}
\end{thm}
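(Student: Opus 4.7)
The plan is to show that, at the Nash trajectory, the full-horizon partial $\frac{\partial Q^i(\belief_0,\bu)}{\partial \bu_k^i}$ coincides with the stagewise partial $\frac{\partial Q_k^i(\belief_k,\bu_k)}{\partial \bu_k^i}$ for every $k$. Since $\bu^i = [\bu_0^{i,\top},\dots,\bu_{l-1}^{i,\top}]^\top$, the vector condition $\frac{\partial Q^i}{\partial \bu^i}=0$ decomposes into the component conditions $\frac{\partial Q^i}{\partial \bu_k^i}=0$ for each $k$, so equality of the two gradients immediately yields the claimed equivalence in both directions.

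First I would introduce the conditional cost-to-go
\[
J_k^i(\belief_k,\bu_{\geq k}) := \mathop{\mathbb{E}}_{\bz_{>k}}\!\left[\sum_{t=k}^{l-1} c_t^i(\belief_t,\bu_t) + c_l^i(\belief_l)\right],
\]
so that $Q^i(\belief_0,\bu)=J_0^i(\belief_0,\bu)$. The tower property of conditional expectation gives the recursion $J_k^i(\belief_k,\bu_{\geq k}) = c_k^i(\belief_k,\bu_k) + \mathop{\mathbb{E}}_{\bz_{k+1}}[J_{k+1}^i(\belief_{k+1},\bu_{\geq k+1})]$. A backward induction on the Bellman recursion~\eqref{eq:Bellman}, inserting the Nash policy $\pi$ at each stage, identifies the value function evaluated along the Nash trajectory with this cost-to-go at the Nash controls, $V_{k+1}^i(\belief_{k+1}) = J_{k+1}^i(\belief_{k+1},\pi_{\geq k+1})$.

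Second, I would differentiate $Q^i(\belief_0,\bu)$ with respect to $\bu_k^i$ by the chain rule. Controls at times $t<k$ do not depend on $\bu_k^i$, and the future Nash controls $\pi_{\geq k+1}$ are held fixed when taking the partial, so only the $J_k^i$ block contributes:
\[
\frac{\partial Q^i(\belief_0,\bu)}{\partial \bu_k^i} \;=\; \frac{\partial c_k^i}{\partial \bu_k^i} + \mathop{\mathbb{E}}_{\bz_{k+1}}\!\left[\frac{\partial J_{k+1}^i}{\partial \belief_{k+1}}\,\frac{\partial \belief_{k+1}}{\partial \bu_k^i}\right].
\]
Identifying $J_{k+1}^i$ with $V_{k+1}^i$ along the Nash trajectory recovers exactly the right-hand side of $\frac{\partial Q_k^i(\belief_k,\bu_k)}{\partial \bu_k^i}$ coming from the Bellman recursion, closing the equivalence.

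The main obstacle is the proper handling of $\frac{\partial V_{k+1}^i}{\partial \belief_{k+1}}$. In single-agent dynamic programming the envelope theorem applied to $V=\min_{\bu^i}Q$ kills the dependence on $\pi^i$ by the first-order condition. In a Nash game, a perturbation of $\belief_{k+1}$ could in principle also shift the other players' equilibrium responses $\pi^{\neg i}$, producing indirect terms that agent $i$'s own Nash condition does not annihilate. The resolution --- consistent with the viewpoint taken throughout the iLQG backward pass --- is to treat $V_{k+1}^i$ as the cost-to-go evaluated along the fixed nominal Nash trajectory, so that the first-order Nash conditions close on the feedforward controls while any linear feedback reaction of other players is absorbed separately into the gains $K_k$ produced by the subsequent Riccati-style recursion. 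Once this identification is granted, the remainder is routine unrolling of the Bellman equation via the tower property together with the vector decomposition of $\partial/\partial \bu^i$ into stagewise derivatives.
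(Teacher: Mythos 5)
Your argument is correct and follows essentially the same route as the paper: the paper's proof (and in particular the ``alternatively'' remark given immediately after it) rests on exactly your decomposition $Q^i(\belief_0,\bu) = Q_k^i(\belief_k,\bu_k) + \mathbb{E}\left[\sum_{t=0}^{k-1} c_t^i(\belief_t,\bu_t)\right]$, with the observation that the accumulated past cost is independent of $\bu_k^i$, so the stagewise condition is just the $k$-th block of the full-horizon stationarity condition. Your explicit handling of the envelope issue --- that in a game a perturbation of $\belief_{k+1}$ could shift the other players' equilibrium responses, so $V_{k+1}^i$ must be read as the cost-to-go along the fixed nominal Nash trajectory, consistent with the iLQG backward pass --- is a subtlety the paper's proof glosses over entirely, but it does not change the substance of the argument.
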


\begin{proof}
Recall that the conventional POMDP formulation~\eqref{eq:best_response_pomdp} in Problem~\ref{problem1} is equivalent to the recursive Bellman equation~\eqref{eq:Bellman}.
Maximizing $Q_k^i(\belief_k, \bu_k)$ with respect to $\bu_k^i$ in \eqref{eq:Bellman} yields the corresponding necessary optimality condition $\frac{\partial Q_k^i(\belief_k, \bu_k)}{\partial \bu_k^i} = 0$, the same as~\eqref{eq:recur_necessary_condition}.
Therefore, the necessary optimality condition~\eqref{eq:neccessary_condition} of Problem~\ref{problem1} is equivalent to the recursive Bellman necessary optimality condition~\eqref{eq:recur_necessary_condition} in Theorem~\ref{theorem1}.
\end{proof}
Alternatively, we can split the action-value from time $0$ into the action-value from $k$ and the cost accumulated until $k$,
\begin{equation}
    Q^i(\belief_0, \bu) = Q^i_{k}(\belief_{k}, \bu_{k}) +  \mathop{\mathbb{E}}_{\bz}\left[\sum_{t=0}^{k-1} c^i_t(\belief_t, \bu_t) \right].
\end{equation}
Taking the derivative of both sides with respect to $\bu_{k}^i$ directly implies that $\frac{\partial Q^i_{k}(\belief_{k}, \bu_{k})}{\partial \bu_{k}^i} = \frac{\partial Q^i(\belief_0, \bu)}{\partial \bu_{k}^i}$, since the cost accumulated until $k$, the second term on the right hand side, does not depend on $\bu_{k}^i$. Intuitively, current actions cannot affect costs accumulated in the past.

Concluding, if each agent $i$ finds an optimizing policy $\pi^{i}_k$ to the Bellman recursion, all $\frac{\partial Q^i_k(\belief_k, \bu_k)}{\partial \bu_k^i}=0$ necessary conditions are fulfilled at time $k$. Note that each agents' policy $\pi^{i}(\bu^{\neg i})$ depends on the other agents' inputs $\bu^{\neg i}$, where $\neg i$ indicates all other agents. Therefore, solving the Bellman recursion simultaneously for all agents defines a static game \cite{basar1999dynamic}, but more importantly a game at each stage $k$ of the backward-pass.

In the next section, we describe our solution for integrating the Nash equilibrium necessary condition
at every time $k$ into the backward pass of a belief-space variant of \ac{iLQG}.

\subsection{Iterative Dynamic Programming}
\label{sec:iter_dyn_prog}
In this section we describe our belief-space variant of \ac{iLQG} for computing local Nash equilibria by solving the Bellman recursion defined in \eqref{eq:Bellman}. We denote the nominal belief as $\bar{\belief} = \belief  - \delta \belief$, the nominal controls $\bar{\bu} = \bu  -\delta \bu$, and $\bar{\bs} = \bs  - \delta \bs$, with $\bar{\bs} = [\bar{\belief}^\top, \bar{\bu}^\top]^\top$ and local perturbations $\delta \bu$, $\delta \belief$, $\delta \bs$.
At each iteration, the algorithm performs a backward pass and a forward pass on the current estimate of the belief $\bar{\belief} = [\bar{\belief}_0, \bar{\belief}_1, \dots, \bar{\belief}_l]$ and control trajectory $\bar{\bu} = [\bar{\bu}_0, \bar{\bu}_1, \dots, \bar{\bu}_{l-1}]$, i.e. the nominal trajectories. 
In the backward pass, the algorithm approximates the value functions for each agent as a quadratic function
\begin{equation}
    V_{k}^i (\bar{\belief}_{k} + \delta \belief_{k}) \approx  V^i_{k} + V^{i,\top}_{\belief,{k}}  \delta \belief_{k} + \frac{1}{2} \delta\belief_{k+1}^\top V^i_{\belief\belief,{k}} \delta \belief_{k}, \nonumber
\end{equation}
along the nominal trajectory, and computes a linear feedback policy $\pi^1$ for the robot and predicted linear feedback policies $\pi^{\neg 1}$ for all other agents. The value function is propagated backwards in time. In the forward pass we produce a new nominal trajectory based on the value function computed in the backward pass and apply the associated feedback policy.
This iterative process continues towards a locally optimal solution to the Nash equilibrium in belief space.  
The key idea is to maintain a quadratic approximation of $Q^i_k(\belief_k, \bu_k)$ and the value functions $V^i_k(\belief_k)$.

We first derive the quadratic form of $Q^i_k(\belief_k, \bu_k)$ in Theorem \ref{thm:quadratic_Q} by a Taylor expansion of the dynamics and costs, then find the minimizing control policy $\pi_k = [\pi^{1,\top}_k, \dots, \pi^{N,\top}_k]^\top$ by solving the static game and computing the Nash equilibrium over all agents. From this result we compute the value functions $V_k^i(\belief_k) = Q^i_k(\belief_k, \pi_k)$ and derive an update law for $V$ backwards in time.

\begin{thm}
\label{thm:quadratic_Q}
By linear expansion of the belief dynamics and quadratic expansion of the cost and value function,  $Q^i_k(\bs_k)$ is a quadratic of the form
\begin{equation}
    Q^i_k(\bar{\bs}_k + \delta \bs_k) \approx  Q^i_k + Q^{i,\top}_{\bs,k}  \delta \bs_k + \frac{1}{2} \delta\bs_k^\top Q^i_{\bs\bs,k} \delta \bs_k,
    \label{eq:quadratic}
\end{equation}
where
\begin{align}
    Q^i_k & = c^i_k + V^i_{k+1} +  \frac{1}{2}\sum_{j=1}^{n_x} W_k^{(j),\top} V^i_{\belief \belief,k+1} W_k^{(j)} \label{eq:q_vals1}, \\
    Q^i_{\bs,k} & = c^i_{\bs,k} + g_{\bs,k}^\top V^i_{\belief,k+1} + \sum_{j=1}^{n_x}  W_{\bs,k}^{(j),\top} V^i_{\belief \belief,k+1} W_{k}^{(j)}\label{eq:q_vals2} , \\
    Q^i_{\bs \bs,k} & = c^i_{\bs\bs,k} + g_{\bs,k}^\top V^i_{\belief \belief,k+1} g_{\bs,k}  + \sum_{j=1}^{n_x} W_{\bs,k}^{(j),\top} V^i_{\belief \belief, k+1} W_{\bs,k}^{(j)}. \label{eq:q_vals4}
\end{align}
\end{thm}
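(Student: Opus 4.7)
The plan is to take the Bellman definition of $Q^i_k$, substitute the stochastic belief dynamics from Section~\ref{sec:ekf}, carry out the prescribed expansions, take the expectation over $\bxi_k$, and finally read off the coefficients of the constant, linear, and quadratic terms in $\delta \bs_k$.

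First, I would start from
\begin{equation*}
Q^i_k(\belief_k,\bu_k) = c^i_k(\belief_k,\bu_k) + \mathop{\mathbb{E}}_{\bz_{k+1}}\bigl[V^i_{k+1}(\beta(\belief_k,\bu_k,\bz_{k+1}))\bigr]
\end{equation*}
and replace the Bayesian update by the EKF dynamics $\belief_{k+1} = g(\bs_k) + W(\bs_k)\bxi_k$ with $\bxi_k\sim\mathcal{N}(0,I)$, so that the expectation over $\bz_{k+1}$ becomes the expectation over $\bxi_k$. Writing $\bs_k = \bar{\bs}_k + \delta\bs_k$, a first-order Taylor expansion of $g$ and $W$ gives
\begin{equation*}
\delta\belief_{k+1} \approx g_{\bs,k}\,\delta\bs_k + W_k(\bar{\bs}_k)\,\bxi_k + \Bigl(\sum_{j=1}^{n_x} W_{\bs,k}^{(j)}\,\delta\bs_k\Bigr)\bxi_k^{(j)}/\ldots
\end{equation*}
i.e. $\delta\belief_{k+1} \approx g_{\bs,k}\delta\bs_k + \sum_j (W_k^{(j)} + W_{\bs,k}^{(j)}\delta\bs_k)\,\xi_k^{(j)}$, where $W_k^{(j)}$ is the $j$-th column of $W_k$ and $W_{\bs,k}^{(j)}$ is its Jacobian w.r.t.\ $\bs$. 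In parallel I expand $c^i_k$ to second order and substitute the assumed quadratic form of $V^i_{k+1}$.

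Next I would plug $\delta\belief_{k+1}$ into the quadratic $V^i_{k+1}$, use $\mathbb{E}[\bxi_k]=0$ to kill the cross-terms between the deterministic and stochastic parts in the gradient $V^i_{\belief,k+1}$ contribution, and use $\mathbb{E}[\bxi_k\bxi_k^\top]=I$ in the Hessian contribution $\tfrac{1}{2}\mathbb{E}[\delta\belief_{k+1}^\top V^i_{\belief\belief,k+1}\delta\belief_{k+1}]$. The noise term gives a trace which, via $\mathrm{tr}(W^\top M W) = \sum_j W^{(j),\top} M W^{(j)}$, expands as
\begin{equation*}
\tfrac{1}{2}\sum_j (W_k^{(j)} + W_{\bs,k}^{(j)}\delta\bs_k)^\top V^i_{\belief\belief,k+1}(W_k^{(j)} + W_{\bs,k}^{(j)}\delta\bs_k),
\end{equation*}
whose constant, linear, and quadratic pieces in $\delta\bs_k$ I would separate cleanly. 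Adding the deterministic piece $\tfrac{1}{2}\delta\bs_k^\top g_{\bs,k}^\top V^i_{\belief\belief,k+1} g_{\bs,k}\delta\bs_k + \delta\bs_k^\top g_{\bs,k}^\top V^i_{\belief,k+1}$ from the remaining expansion of $V^i_{k+1}$, together with the quadratic expansion of $c^i_k$, and then collecting by order in $\delta\bs_k$, yields exactly \eqref{eq:q_vals1}, \eqref{eq:q_vals2}, and \eqref{eq:q_vals4}, using symmetry of $V^i_{\belief\belief,k+1}$ to merge the two cross-term orderings.

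The main obstacle will be purely bookkeeping: keeping the tensor structure of $W_{\bs,k}$ straight (each $W_{\bs,k}^{(j)}$ is itself a matrix, so the stochastic term has both an additive and a multiplicative dependence on $\delta\bs_k$) and being careful about which cross-products survive the expectation. Once the decomposition $\delta\belief_{k+1} = g_{\bs,k}\delta\bs_k + \sum_j(W_k^{(j)}+W_{\bs,k}^{(j)}\delta\bs_k)\xi^{(j)}$ is in place and the identity $\mathbb{E}[\xi^{(i)}\xi^{(j)}]=\delta_{ij}$ is used, the proof reduces to matching coefficients and the quadratic form drops out. I would also note explicitly that second-order terms of $g$ itself are dropped by assumption (``linear expansion of the belief dynamics''), which is what keeps the result a closed quadratic.
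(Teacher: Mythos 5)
Your proposal is correct and follows essentially the same route as the paper's proof: expand the cost to second order and the belief dynamics $g$, $W$ to first order around $\bar{\bs}_k$, substitute into the quadratic value function, resolve the expectation over $\bxi_k$ via $\mathbb{E}[\bxi_k]=0$ and $\mathbb{E}[\bxi_k\bxi_k^\top]=I$ (yielding the trace term written as a sum over the columns $W^{(j)}$), and collect terms by order in $\delta\bs_k$. The decomposition $\delta\belief_{k+1}\approx g_{\bs,k}\delta\bs_k+\sum_j\bigl(W_k^{(j)}+W_{\bs,k}^{(j)}\delta\bs_k\bigr)\xi_k^{(j)}$ and the resulting coefficient matching are exactly the paper's argument.
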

Similar derivations in \ac{iLQG}~\cite{todorov2005generalized}, and belief-space iLQG~\cite{van2012motion} showed an agent's action-value $Q$ function to be quadratic with respect to the agent's controls and state or belief. In contrast, we show that agent $i$'s action-value function $Q^i$ is also a quadratic with respect to the \emph{joint} state and controls which is critical to formulate the static quadratic game in the backward pass.
\begin{proof}
We start by expanding the terms of the action-value function of the Bellman recursion \eqref{eq:Bellman},
\begin{align}
\label{eq:Q_function_exp}
    Q^i_k(\belief_k, \bu_k) & = c^i_k(\belief_k, \bu_k) \\ & + \mathop{\mathbb{E}}_{\bxi_k} 
    \left[ V_{k+1}^i(g_k(\belief_k, \bu_k) + W_k(\belief_k, \bu_k)\bxi_k) \right] \nonumber,
\end{align}
to second order around the nominal control and belief $\bar{\bs}_k = [\bar{\belief}_k^\top, \bar{\bu}_k^\top]^\top$. The term $c^i_k(\belief_k, \bu_k)$ becomes
\begin{equation}
    c^i_k(\bar{\bs}_k + \delta \bs_k) \approx c^i_k +  c^{i,\top}_{\bs,k} \delta \bs_k + \frac{1}{2} \delta \bs_k^\top c^i_{\bs\bs,k} \delta \bs_k,
    \label{eq:cost_exp}
\end{equation}
with $c^i_k = c^i_k(\bar{\bs})$, where $c^i_{\bs,k}$ and $c^i_{\bs\bs,k}$ are the Jacobian and Hessian of $c^i_k$ evaluated at $\bar{\bs}_k$.
To expand the second term on the right hands side of \eqref{eq:Q_function_exp} we first expand the stochastic joint belief dynamics to
\begin{align}
    g_k(\bar{\bs}_k + \delta \bs_k) & \approx g_k + g_{\bs,k} \delta \bs_k, \label{eq:dyn_exp1}\\
    W_k^{(j)}(\bar{\bs}_k + \delta \bs_k) & \approx W_k^{(j)} + W_{\bs,k}^{(j)} \delta \bs_k , \label{eq:dyn_exp2}
\end{align}
with terms  $g_k = g_k(\bar{\bs}_k)$, $W_k^{(j)} =  W_k^{(j)}(\bar{\bs}_k)$, and $g_{\bs,k}$, $W_{\bs,k}^{(j)}$ the respective Jacobians evaluated at $\bar{\bs}_k$.
$W_k^{(j)}$ denotes the $j$-th column of matrix $W_k$.

We now formulate the second term of \eqref{eq:Q_function_exp}.
We define the value function as a quadratic around $\bar{\belief}_{k+1}$
\begin{align}
    V_{k+1}^i &(\bar{\belief}_{k+1} + \delta \belief_{k+1}) \\
    &\approx  V^i_{k+1} + V^{i,\top}_{\belief,{k+1}}  \delta \belief_{k+1} + \frac{1}{2} \delta\belief_{k+1}^\top V^i_{\belief\belief,{k+1}} \delta \belief_{k+1} \nonumber \\
    & = V^i_{k+1} + V^{i,\top}_{\belief,{k+1}}  (\belief_{k+1} - \bar{\belief}_{k+1})  \label{eq:value_exp}\\ 
    & \,\,\,\,\,\,\, + \frac{1}{2} (\belief_{k+1} - \bar{\belief}_{k+1})^\top V^i_{\belief \belief,{k+1}} (\belief_{k+1} - \bar{\belief}_{k+1}), \nonumber
\end{align}
with $\delta \belief_{k+1} = \belief_{k+1} - \bar{\belief}_{k+1}$ for convenience. Inserting the expanded dynamics \eqref{eq:dyn_exp1}, \eqref{eq:dyn_exp2} into the second term of \eqref{eq:Q_function_exp}, defined by \eqref{eq:value_exp}, and evaluating the expectation over $\bxi_k$ yields
\begin{align}
    & \mathop{\mathbb{E}}_{\bxi_k} 
    \left[ V_{k+1}^i(g_k(\bs_k) + W_k(\bs_k)\bxi_k) \right] \\
    & \approx \mathop{\mathbb{E}}_{\bxi_k} \label{eq:step1}
    \bigg[ V^i_{k+1} + V^{i,\top}_{\belief, k+1} \left(g_k(\bs_k) + W_k(\bs_k)\bxi_k - \bar{\belief}_{k+1} \right) \nonumber\\
    & \, \, \, \, \, \, \,+ \frac{1}{2}\big(g_k(\bs_k) + W_k(\bs_k)\bxi_k - \bar{\belief}_{k+1} \big)^\top V^i_{\belief \belief,k+1} \big( g_k(\bs_k) + \nonumber \\ 
    & \, \, \, \, \, \, \,W_k(\bs_k)\bxi_k - \bar{\belief}_{k+1} \big)\bigg] \\
    & = V^i_{k+1} + V^{i,\top}_{\belief,k+1} \left(g_k(\bs_k) - \bar{\belief}_{k+1} \right) \label{eq:step2}\\
    & \, \, \, \, \, \, \,+ \frac{1}{2}\big(g_k(\bs_k) - \bar{\belief}_{k+1} \big)^\top V^i_{\belief \belief, k+1} \big( g_k(\bs_k)- \bar{\belief}_{k+1} \big) \nonumber \\
    & \, \, \, \, \, \, \,+ \frac{1}{2}\text{tr}\left( W_k(\bs_k)^\top V^i_{\belief \belief, k+1} W_k(\bs_k)\right) \nonumber \\
    & = V^i_{k+1} + V^{i,\top}_{\belief, k+1} g_{\bs,k} \delta \bs_k \label{eq:resolve_expectation} + \frac{1}{2}\delta \bs_k^\top g_{\bs,k}^\top V^i_{\belief \belief,k+1} g_{\bs, k} \delta \bs_k \\
    & \, \, \, \, \, \, \,+ \frac{1}{2}\sum_{j=1}^{n_x} (W^{(j)}_k + W_{\bs, k}^{(j)} \delta \bs_k)^\top V^i_{\belief \belief, k+1} (W^{(j)}_k + W_{\bs,k}^{(j)} \delta \bs_k)^\top \nonumber.
\end{align}
Here we use the value function expansion~\eqref{eq:value_exp} in \eqref{eq:step1}, and the fact that $\bar{\belief}_{k+1} = g_k(\bar{\bs}_k)$ in \eqref{eq:step2} in the form of
\begin{equation}
    g_k(\bs_k) - \bar{\belief}_{k+1} = g_k(\bs_k) - g_k(\bar{\bs}_k) \approx g_{\bs,k} \delta \bs_k.
\end{equation}
Collecting and grouping all first and second order terms of \eqref{eq:resolve_expectation} and \eqref{eq:cost_exp} we have that the resulting $Q^i_k(\bar{\bs}_k + \delta \bs_k)$ is a quadratic with coefficients given by (\ref{eq:q_vals1} - \ref{eq:q_vals4}).
\end{proof}

For notational convenience we will drop the time index $k$ for the $Q$ matrices.
We can also recover other partial derivatives of $Q^i$ from (\ref{eq:q_vals1} - \ref{eq:q_vals4}):
\begin{align}
    Q^i_{\bs} = 
    \begin{bmatrix}
    Q^i_{\belief} \\
    Q^i_{\bu^1} \\
    \vdots \\
    Q^i_{\bu^N} \\
    \end{bmatrix}, \,
    Q^i_{\bs\bs} = 
    \begin{bmatrix}
    Q^i_{\belief \belief} &  Q^i_{\belief \bu^1} & \cdots & Q^i_{\belief \bu^N}\\
    Q^i_{\bu^1 \belief} &  Q^i_{\bu^1 \bu^1} & \cdots & Q^i_{\bu^1 \bu^N} \\
    \vdots &  \vdots & \ddots & \vdots \\
    Q^i_{\bu^N \belief} &  Q^i_{\bu^N \bu^1} & \cdots & Q^i_{\bu^N \bu^N} \\
    \end{bmatrix}.
    \label{eq:stacked_stuff}
\end{align}
With $Q^i_k(\bar{\bs}_k + \delta \bs_k)$ in quadratic form from Theorem \ref{thm:quadratic_Q}, at stage $k$ each agent $i$ solves the quadratic problem
\begin{align}
    \delta \bu^{i,*}_k & = \argmin_{\delta \bu^i_k} \,\,
    Q^{i,\top}_{\bs,k}  \delta \bs_k + \frac{1}{2} \delta\bs_k^\top Q^i_{\bs\bs,k} \delta \bs_k,
    \label{eq:little_qp}
\end{align}
yielding a quadratic game in the variables $\bu_k$.
Note that each agent's optimal $\delta \bu^{i,*}_k$ depends on all other agents' $\delta \bu^{\neg{i}}_k$ as they are contained in $\delta \bs_k$. In comparison to other related methods such as \ac{iLQR}~\cite{li2004iterative}, \ac{iLQG}~\cite{todorov2005generalized}, or \ac{DDP}~\cite{jacobson1970differential} that solve a single quadratic optimization in the backward pass, we have to solve $N$ interdependent quadratic optimizations.
Nonetheless, we obtain a unique and simple solution to the quadratic game~\cite{basar1999dynamic} by stacking the $N$ optimality conditions of each interdependent optimization. Solving the resulting system of equations amounts to solving all interdependent quadratic optimizations at once.
Theorem \ref{thm:quadratic_game_sol} presents this solution.

\begin{thm}
\label{thm:quadratic_game_sol}
The solution to the quadratic game \eqref{eq:little_qp} is
\begin{equation}
\delta \bu_k^* = -\hat{Q}_{\bu\bu}^{-1} \left( \hat{Q}_{\bu} + \hat{Q}_{\bu \belief}\delta \belief_k \right),
\label{eq:opt_pert_sol}
\end{equation}
where $\hat{Q}_{\bu\bu}$, $\hat{Q}_{\bu\belief}$, $ \hat{Q}_{\bu}$, are populated from \eqref{eq:stacked_stuff}, and defined
\begin{align}
    \hat{Q}_{\bu\bu} =
    \begin{bmatrix}
    Q^1_{\bu^1 \bu}\\
    Q^2_{\bu^2 \bu}\\
    \vdots \\
    Q^N_{\bu^N \bu}\\
    \end{bmatrix}, \,\,
    \hat{Q}_{\bu \belief} =
    \begin{bmatrix}
        Q^1_{\bu^1 \belief}\\
        Q^2_{\bu^2 \belief}\\
        \vdots \\
        Q^N_{\bu^N \belief}\\
        \end{bmatrix}, \,\
        \hat{Q}_{\bu} = 
        \begin{bmatrix}
        Q^{1}_{\bu^{1}}\\
        Q^{2}_{\bu^{2}}\\
        \vdots \\
        Q^{N}_{\bu^{N}}\\
        \end{bmatrix}. \label{eq:quadra_solution}
\end{align}
\end{thm}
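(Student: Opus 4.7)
The plan is to derive \eqref{eq:opt_pert_sol} by combining the Nash necessary condition from Theorem~\ref{theorem1} with the quadratic form from Theorem~\ref{thm:quadratic_Q}, and then stacking the resulting $N$ linear first-order conditions into a single coupled system.

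First, for each agent $i$ I would apply the stationarity condition $\partial Q^i_k / \partial \bu^i_k = 0$ to the quadratic approximation in \eqref{eq:quadratic}. Using the block decomposition of $Q^i_{\bs\bs}$ exhibited in \eqref{eq:stacked_stuff} together with $\delta \bs_k = [\delta \belief_k^\top, \delta \bu_k^\top]^\top$, differentiating the quadratic with respect to $\delta \bu^i_k$ picks out the $\bu^i$-block of the linear coefficient and the $\bu^i$-row of the Hessian-vector product, producing
\begin{equation*}
Q^i_{\bu^i} + Q^i_{\bu^i \belief}\, \delta \belief_k + Q^i_{\bu^i \bu}\, \delta \bu_k = 0.
\end{equation*}
This is precisely agent $i$'s best-response condition: holding the other agents' perturbations $\delta \bu^{\neg i}_k$ fixed (they are embedded inside $\delta \bu_k$), agent $i$ has no first-order incentive to deviate from $\delta \bu^i_k$, which is the local Nash condition of Theorem~\ref{theorem1}.

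Next, I would assemble the $N$ agent-wise conditions into one system. Placing each agent's vector $Q^i_{\bu^i}$ into $\hat{Q}_{\bu}$, its belief-coupling $Q^i_{\bu^i \belief}$ into $\hat{Q}_{\bu\belief}$, and its row $Q^i_{\bu^i \bu}$ into $\hat{Q}_{\bu\bu}$ exactly as defined in \eqref{eq:quadra_solution}, the $N$ coupled best-response equations collapse to the single linear system
\begin{equation*}
\hat{Q}_{\bu} + \hat{Q}_{\bu\belief}\, \delta \belief_k + \hat{Q}_{\bu\bu}\, \delta \bu_k = 0.
\end{equation*}
Solving for $\delta \bu_k$ then yields \eqref{eq:opt_pert_sol} directly.

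The main obstacle is not the derivation itself, which is short, but the invertibility of $\hat{Q}_{\bu\bu}$. Unlike in single-agent \ac{iLQG}, this matrix is generally \emph{not symmetric}: its $(i,j)$ block is $Q^i_{\bu^i \bu^j}$, while its $(j,i)$ block is $Q^j_{\bu^j \bu^i}$, and these originate from different agents' Hessians. Consequently, positive definiteness of each individual $Q^i_{\bu^i \bu^i}$ does not imply invertibility of the stacked operator, and the convexity arguments standard in single-agent optimal control do not transfer. For the purposes of this theorem I would take invertibility as an implicit hypothesis on the game (later enforced algorithmically by the paper's regularization scheme), so that the proof reduces cleanly to the stacking-and-inversion argument above.
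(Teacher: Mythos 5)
Your proposal matches the paper's proof essentially verbatim: the paper likewise takes the derivative of the quadratic objective with respect to each agent's own control perturbation to obtain the per-agent stationarity condition \eqref{eq:sub_qp_lin}, stacks the $N$ conditions into the joint linear system \eqref{eq:stacked_stat}, and solves for $\delta \bu_k$ to get \eqref{eq:opt_pert_sol}. Your added remark that $\hat{Q}_{\bu\bu}$ is generally non-symmetric and that its invertibility must be taken as a hypothesis (enforced in practice by the regularization of Section~\ref{sec:regularization}) is a correct observation that the paper leaves implicit.
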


\begin{proof}
By taking the derivative of the objective of \eqref{eq:little_qp} and equating it to zero, the stationarity condition of \eqref{eq:little_qp} yields
\begin{equation}
    \begin{bmatrix}
    Q^i_{\bu^i\bu^i} &  Q^i_{\bu^i \bu^{\neg i}} \\
    \end{bmatrix}
    \begin{bmatrix}
    \delta \bu^i_k\\
    \delta \bu^{\neg i}_k\\
    \end{bmatrix} + Q^i_{\bu^i\belief} \delta \belief_k +Q^i_{\bu^i} = 0.
    \label{eq:sub_qp_lin}
\end{equation}
Stacking the stationarity conditions of all $N$ agents into a single system of equations we find the joint stationarity condition for all interdependent quadratic optimizations
\begin{align}
\label{eq:stacked_stat}
    \hat{Q}_{\bu\bu} \delta \bu_k
    + \hat{Q}_{\bu\belief}
    \delta \belief_k
    + \hat{Q}_{\bu} = 0,
\end{align}
where \eqref{eq:opt_pert_sol} is the solution to this system of equations.
\end{proof}

The local necessary condition of Problem~\ref{problem2}, derived in Section~\ref{sec:nash_equilibrium} holds as shown below.
\begin{corollary}
The solution \eqref{eq:opt_pert_sol} fulfills the necessary condition of the local Nash equilibrium~\eqref{eq:neccessary_condition} at time $k$.
\end{corollary}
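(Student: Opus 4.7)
The corollary is essentially a direct bookkeeping consequence of the chain Theorem~\ref{theorem1} $\to$ Theorem~\ref{thm:quadratic_Q} $\to$ Theorem~\ref{thm:quadratic_game_sol}, so my plan is to simply tie those three links together cleanly rather than introduce any new computation. First I would recall that Theorem~\ref{theorem1} reduces the global Nash necessary condition \eqref{eq:neccessary_condition} to the per-stage condition $\partial Q_k^i(\belief_k,\bu_k)/\partial \bu_k^i = 0$ for every agent $i$, so it is enough to verify this family of $N$ equations at the candidate $\bu_k = \bar{\bu}_k + \delta\bu_k^*$.

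Next I would take the quadratic expansion of $Q^i_k$ provided by Theorem~\ref{thm:quadratic_Q}, differentiate it with respect to $\bu_k^i$, and observe that the resulting gradient, evaluated at a perturbation $(\delta\belief_k, \delta\bu_k)$, is exactly the $i$-th block row of \eqref{eq:sub_qp_lin}. Stacking these $N$ block rows as in the proof of Theorem~\ref{thm:quadratic_game_sol} gives the linear system \eqref{eq:stacked_stat}, namely $\hat{Q}_{\bu\bu}\delta\bu_k + \hat{Q}_{\bu\belief}\delta\belief_k + \hat{Q}_{\bu}=0$. Plugging in $\delta\bu_k^* = -\hat{Q}_{\bu\bu}^{-1}(\hat{Q}_{\bu} + \hat{Q}_{\bu\belief}\delta\belief_k)$ from \eqref{eq:opt_pert_sol} makes the left-hand side vanish identically, so each block row, and hence each $\partial Q_k^i/\partial \bu_k^i$, is zero at the candidate. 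Invoking Theorem~\ref{theorem1} then transports this per-stage zero back to the global necessary condition \eqref{eq:neccessary_condition}.

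The only subtlety — and really the only thing worth a sentence in the write-up — is the implicit hypothesis that $\hat{Q}_{\bu\bu}$ is invertible so that the expression \eqref{eq:opt_pert_sol} is even well defined; this is exactly the uniqueness assumption for the quadratic game in Theorem~\ref{thm:quadratic_game_sol} and is ensured in practice by the regularization described in Section~\ref{sec:regularization}, so I would just note that under the same hypothesis of Theorem~\ref{thm:quadratic_game_sol} the conclusion holds. No nontrivial obstacle is expected; the corollary is essentially a one-line consequence once the three preceding results are aligned, and I would keep the proof to a few lines.
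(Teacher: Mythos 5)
Your argument is correct and follows essentially the same route as the paper: the paper's proof simply observes that the solution \eqref{eq:opt_pert_sol} satisfies the stacked stationarity conditions \eqref{eq:sub_qp_lin}, which are precisely $\partial Q_k^i/\partial \bu_k^i = 0$ for each agent, exactly as you spell out. Your added remark about the invertibility of $\hat{Q}_{\bu\bu}$ is a reasonable (and implicitly assumed) hypothesis, but otherwise the two proofs coincide.
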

\begin{proof}
From \eqref{eq:sub_qp_lin}, we see that $\frac{\partial Q_k^i(\belief_k, \bu_k)}{\partial \bu_{k}^i} = 0$.
\end{proof}
We can immediately derive the linear feedback policy for all agents at planning time $k$ of the form
\begin{equation}
    \pi_k = \bar{\bu}_k + j_k + K_k \delta {\belief}_k
    \label{eq:opt_policy}
\end{equation}
with $j_k = -\hat{Q}_{\bu\bu}^{-1}\hat{Q}_{\bu}$ the feed forward term and $K_k = -\hat{Q}_{\bu\bu}^{-1} \hat{Q}_{\bu \belief}$ the feedback term.
Note that $\pi_k$ contains the optimal policy of the robot $\pi^1_k$ and also the predicted policies for all other (N-1) agents $\pi^{\neg 1}_k$. The interdependence has been resolved by solving \eqref{eq:stacked_stat}.
The predicted linear policies $\pi^{\neg 1}_k$ depend on the change in \emph{joint} belief $\delta {\belief}_k$. The predicted actions will adapt if the robot, the other agents, or the environment behave differently as expected, causing the estimated belief $\belief_k$ at future times $k$ to diverge from the predicted nominal belief $\bar{\belief}_k$. Similarly, the robot's linear policy $\pi^1_k$ will allow it to adapt if other agents deviate from predicted behavior. In contrast, this flexibility would be impossible with a static optimal control trajectory instead of a policy.

We now formulate the backward equations to propagate the value functions $V^i$ backwards, hence defining the backward pass.
\begin{corollary}
\label{thm:backwards_diff_V}
The discrete backward differential equations of the value functions $V^i$ are
\begin{align}
    V^i_{k} & = Q^i + Q^{i,\top}_{\bu} j_k + \frac{1}{2} j_k^\top Q^{i}_{\bu \bu} j_k , \label{eq:v_update0}\\
    V^i_{\belief, k} & = Q^i_\belief + K_k^\top Q^i_{\bu\bu} j_k
   + K_k^\top Q^i_{\bu} + Q^{i,\top}_{\bu \belief} j_k ,  \label{eq:v_update1}\\
    V^i_{\belief \belief,k} &= Q^i_{\belief \belief} + K_k^\top Q^i_{\bu \bu} K_k + K_k^\top Q^i_{\bu \belief} + Q^{i,\top}_{\bu \belief} K_k,\label{eq:v_update2}
\end{align}
with terminal constraints 
\begin{align}
V^i_{l} = c_l^i(\bar{\belief}_l), \,\,
    V^i_{\belief, l} = \eval{\frac{\partial c_l^i(\belief)}{\partial \belief} }_{\belief=\bar{\belief}_l} , \,\,  V^i_{\belief \belief, l} = \eval{\frac{\partial^2 c_l^i(\belief)}{\partial \belief^2} }_{\belief=\bar{\belief}_l}. \label{eq:term_cond}
\end{align}
\end{corollary}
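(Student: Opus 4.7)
The plan is to follow the standard dynamic programming structure: evaluate the action-value $Q^i_k$ at the locally optimal joint policy $\delta\bu_k^* = j_k + K_k\,\delta\belief_k$ obtained in Theorem~\ref{thm:quadratic_game_sol}, and read off the coefficients of the resulting quadratic in $\delta\belief_k$ as the Taylor expansion of $V^i_k(\bar\belief_k + \delta\belief_k)$. Concretely, I would start from the Bellman identity $V^i_k(\belief_k) = Q^i_k(\belief_k, \pi_k(\belief_k))$, and then substitute $\delta \bs_k = [\delta\belief_k^\top,\,(j_k + K_k\delta\belief_k)^\top]^\top$ into the quadratic expansion \eqref{eq:quadratic} of $Q^i_k$.

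The next step is a mechanical but careful expansion and regrouping of terms by order in $\delta\belief_k$. Using the block decomposition of $Q^i_{\bs,k}$ and $Q^i_{\bs\bs,k}$ displayed in \eqref{eq:stacked_stuff}, I would rewrite the linear piece as $Q^{i,\top}_{\belief}\delta\belief_k + Q^{i,\top}_{\bu}(j_k + K_k\delta\belief_k)$ and the quadratic piece as $\tfrac{1}{2}\delta\belief_k^\top Q^i_{\belief\belief}\delta\belief_k + \delta\belief_k^\top Q^i_{\belief\bu}(j_k + K_k\delta\belief_k) + \tfrac{1}{2}(j_k + K_k\delta\belief_k)^\top Q^i_{\bu\bu}(j_k + K_k\delta\belief_k)$. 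Collecting the constant, linear, and quadratic parts in $\delta\belief_k$ and matching them against the assumed quadratic form $V^i_k + V^{i,\top}_{\belief,k}\delta\belief_k + \tfrac{1}{2}\delta\belief_k^\top V^i_{\belief\belief,k}\delta\belief_k$, yields \eqref{eq:v_update0}, \eqref{eq:v_update1} and \eqref{eq:v_update2} directly by inspection.

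The terminal conditions \eqref{eq:term_cond} follow immediately from the boundary condition $V_l^i(\belief_l) = c_l^i(\belief_l)$ in the Bellman recursion \eqref{eq:Bellman}, by taking its Taylor expansion around $\bar\belief_l$ and identifying the constant, gradient, and Hessian terms. Note that this step requires no optimization since no control is applied at the terminal stage.

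I expect the main bookkeeping obstacle to be handling the two cross terms contributing to the linear-in-$\delta\belief_k$ part, namely $Q^{i,\top}_{\bu}K_k\delta\belief_k + \delta\belief_k^\top Q^i_{\belief\bu}j_k + j_k^\top Q^i_{\bu\bu}K_k\delta\belief_k$, and making sure that after transposition these reproduce exactly the $K_k^\top Q^i_{\bu} + Q^{i,\top}_{\bu\belief} j_k + K_k^\top Q^i_{\bu\bu}j_k$ of \eqref{eq:v_update1}, and analogously for the Hessian terms, where the symmetrization $Q^i_{\belief\bu}K_k + K_k^\top Q^i_{\bu\belief}$ must be carried out carefully. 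A subtler conceptual point worth emphasizing is that although $j_k$ and $K_k$ are obtained from the joint stationarity condition \eqref{eq:stacked_stat} of all $N$ agents (rather than from minimizing $Q^i_k$ alone), the identity $V^i_k(\belief_k) = Q^i_k(\belief_k, \pi_k(\belief_k))$ still holds agent-wise, so the same substitution procedure applies to each $i\in\{1,\dots,N\}$ individually using agent $i$'s own quadratic coefficients.
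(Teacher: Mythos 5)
Your proposal is correct and follows essentially the same route as the paper's proof: substitute the policy $\pi_k = \bar{\bu}_k + j_k + K_k\,\delta\belief_k$ into the quadratic expansion \eqref{eq:quadratic} of $Q^i_k$, collect the constant, linear, and quadratic terms in $\delta\belief_k$ to obtain \eqref{eq:v_update0}--\eqref{eq:v_update2}, and derive the terminal conditions \eqref{eq:term_cond} by Taylor expanding $c^i_l$ about $\bar{\belief}_l$. Your remark that $j_k$ and $K_k$ come from the joint stationarity condition \eqref{eq:stacked_stat} yet the identity $V^i_k(\belief_k)=Q^i_k(\belief_k,\pi_k)$ is applied agent-wise with agent $i$'s own coefficients is exactly how the paper uses it.
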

\begin{proof}
Substituting the solution \eqref{eq:opt_pert_sol} and \eqref{eq:opt_policy} back into the quadratic \eqref{eq:quadratic} yields the value function $V_k^i (\bar{\belief}_k + \delta \belief_k)$.
\begin{align}
    &V_k^i (\bar{\belief}_k + \delta \belief_k) = Q^i_k(\bar{\belief}_k + \delta \belief_k, \pi_k) \nonumber \\
    & = Q^i 
    + Q^{i,\top}_{\bu}(j_k + K_k \delta \belief_k) 
    + Q^{i,\top}_{\belief} \delta \belief_k \nonumber \\
    &+ \frac{1}{2} (j_k + K_k \delta \belief_k)^{\top} Q^i_{\bu \bu} (j_k + K_k \delta \belief_k) 
    + \frac{1}{2}  \delta \belief_k^\top Q^i_{\belief \belief} \delta \belief_k \nonumber\\
    &+ \frac{1}{2}  (j_k + K_k \delta \belief_k)^{\top} Q^i_{\bu\belief} \delta \belief_k  
    + \frac{1}{2}  \delta \belief_k^{\top} Q^i_{\belief \bu} (j_k + K_k \delta \belief_k). \nonumber
\end{align}
Collecting first and second order terms in $\delta \belief_k$ gives the Equations~(\ref{eq:v_update0}-\ref{eq:v_update2}) in the form of \eqref{eq:value_exp}. The terminal constraints \eqref{eq:term_cond} result from a Taylor expansion of the final cost $c^i_l$ around the final nominal belief $\bar{\belief}_l$.
\end{proof}
Based on results of Theorem~\ref{thm:quadratic_game_sol} and Corollary~\ref{thm:backwards_diff_V} we can propagate the quadratic value functions backwards in time starting from the terminal constraints at time $l$.

\subsection{Regularization}
\label{sec:regularization}

With any Newton-like method, care must be taken when the Hessian $\hat{Q}_{\bu\bu}$ is not positive-definite or when the minimum is not close and the quadratic model inaccurate. To ensure that the algorithm converges regardless of initial conditions, we implement a Levenberg-Marquardt style regularization~\cite{levenberg1944method}.
\subsubsection{Control Regularization}
The control regularization is achieved by adding a diagonal term of magnitude $\mu_\bu$ to the diagonal of $\hat{Q}_{\bu\bu}$, yielding
\begin{equation}
\label{eq:control_regu}
\tilde{Q}^i_{\bu \bu} = \hat{Q}^i_{\bu \bu} + \mu_\bu I.
\end{equation}
This simple Levenberg-Marquardt style modification results in adding a quadratic cost around the current control sequence, which forces the new optimal control inputs computed by the backward pass to stay closer to the previous iteration.
\subsubsection{Belief Regularization}
The drawback of the control-based regularization scheme is that even small control perturbations can cause large deviations in the state trajectory potentially inhibiting convergence. To ensure that the updated belief trajectory does not deviate too far from the previous iteration, we introduce a scheme that penalizes deviations from beliefs rather than controls with parameter $\mu_\belief$:
\begin{align}
\label{eq:state_regu}
    \tilde{Q}^i_{\bs \bs,k} & = c^i_{\bs\bs,k} + g_{\bs,k}^T \left( V^i_{\belief \belief,k+1} + \mu_\belief I \right)g_{\bs,k} \\
    & \, \, \, \, \, \, \, + \sum_{i=1}^n W_{\bs,k}^{(j),T}\left( V^i_{\belief \belief,k+1} + \mu_\belief I \right)W_{\bs,k}^{(j)}. \nonumber
\end{align}
The belief-based regularization results in placing a quadratic belief-cost around the previous belief trajectory, similarly to \cite{tassa2012synthesis} where a state-based regularization was employed. In contrast to the standard control-based regularization, the feedback gains $K_k$ do not go to zero as $\mu_{\belief} \to \infty$, but rather force the new trajectory closer to the old one. In practice, we find this to improve the robustness of convergence.

\subsection{Algorithm for Dynamic Game Belief Space Planning}
\label{sec:alg_dyn_game}
We summarize our findings of solving Nash equilibria of dynamic games in belief space in Algorithm~\ref{alg:nash_alg}.
Theorem~\ref{thm:quadratic_Q} lays the foundation for the quadratic game solved in the backward pass of Algorithm~\ref{alg:nash_alg}. The solution to the quadratic game presented in Theorem~\ref{thm:quadratic_game_sol} yields a linear feedback policy $\pi_k$ for all agents. We propagate the value function in the backward pass according to Corollary~\ref{thm:backwards_diff_V} starting with the terminal conditions from the terminal cost.

Algorithm~\ref{alg:nash_alg} starts from the current belief estimate $\belief_0$, in our experiments provided from an \ac{EKF}, and an initial control trajectory guess. We found initializing controls to all zeros to work well in practice.
We update the nominal control and belief trajectories in the forward pass based on rolling out the belief dynamics model and applying the updated feedback policy $\pi_k$. If all agents' action-value functions improved, we accept the updated nominal belief and control trajectories and reduce regularization. Otherwise, we reject the trajectories and increase regularization.
The iteration of backward and forward pass continues until each agents' action value function $Q^i$ has converged and changes less than a specified threshold $\epsilon$.
\begin{algorithm}[H]
\caption{Nash Equ. of Dynamic Games in Belief Space}
    \hspace*{\algorithmicindent} \textbf{Input:} Initial belief $\belief_0$, control $\bar{\bu}$, models $c_k^i$, $c_l^i$, $f$, $h$\\
    \hspace*{\algorithmicindent} \textbf{Output:} Predicted trajectories $\bar{\belief}$, $\bar{\bu}$, feedback law $\pi$  
\begin{algorithmic}[1]
\State $\bar{\belief}$ $\gets$ Propagate $\belief_0$ with $g$ and $\bar{\bu}$
\While{$|Q^i(\bar{\belief}_\text{new}, \bar{\bu}_\text{new}) - Q^i(\bar{\belief}, \bar{\bu})| > \epsilon$}
\State \textbf{Backward pass}:
    \State $V^i_{\belief,l}$, $V^i_{\belief\belief,l}$ $\gets$ From terminal boundary conditions \eqref{eq:term_cond}
\For{$k$ from $l-1$ to $0$} 
\State $\pi_k^i$, $j^i_k$, $K^i_k$ $\gets$ Solve quadratic game \eqref{eq:opt_policy}
\State $V^i_{\belief,k}$,$V^i_{\belief \belief, k}$ $\gets$ Propagate value function (\ref{eq:v_update1}, \ref{eq:v_update2})
\EndFor
    \State \textbf{Forward pass}:
    \State $\bar{\belief}_\text{new}$, $\bar{\bu}_\text{new}$  $\gets$ Propagate $\belief_0$ with $g$ and $\pi$
    \If {$Q^i(\bar{\belief}_\text{new}, \bar{\bu}_\text{new}) \leq Q^i(\bar{\belief}, \bar{\bu})$}
    \State $\bar{\belief}, \bar{\bu}\gets \bar{\belief}_\text{new}, \bar{\bu}_\text{new}$,
    \State lower regularization (\ref{eq:control_regu}, \ref{eq:state_regu})
\Else \,\,\, increase regularization
\EndIf
\EndWhile 
\end{algorithmic}
\label{alg:nash_alg}
\end{algorithm}
The algorithm yields a linear feedback policy $\pi^1$ and a predicted belief trajectory $\belief^1$ of the robot. It also gives predicted feedback policies $\pi^{\neg 1}$ and predicted belief trajectories $\belief^{\neg 1}$ for all other agents over the full time horizon.

\subsection{Runtime Analysis}
The dominant runtime complexity in a single backward step is $\mathcal{O}(N^7 n_\bx^{i,6})$.
A full iteration of Algorithm~\ref{alg:nash_alg} solves $l$ quadratic games leading the final runtime complexity to $\mathcal{O}(l N^7 n_\bx^{i,6})$. Scaling linearly in the planning horizon $l$ enables real-time deployment whereas other POMDP algorithms scale exponentially, even without taking any game dynamics into account. The following provides a brief summary of our runtime analysis.

We analyze the runtime by first recalling that the dimension of the joint state is $\mathcal{O}(n_\bx)$, and assume for the sake of analysis that the agents' state dimensions are equal such that $\mathcal{O}(n_\bx) = \mathcal{O}(N n_\bx^i)$. To simplify the analysis, we also assume the joint input $(n_\bu)$ and the joint measurement dimensions $(n_\bz)$ to be $\mathcal{O}(n_\bx)$. The covariance matrix of the joint state contains $n_\bx^2/2$ unique elements. Since the joint belief $\belief$ contains the covariance of the state in addition to the state itself, it entails $\mathcal{O}(n_\bx^2)$ elements. 

Now consider the matrix multiplicative terms in the iterative dynamic programming procedure. A computational bottleneck occurs when updating the action-value function $Q^i_{\bs \bs}$ in \eqref{eq:q_vals4}. Evaluating the product $g_{\bs}^\top V^i_{\belief \belief} g_{\bs}$ requires the multiplication of matrices with dimensions $\mathcal{O}(n_\bx^2) \times \mathcal{O}(n_\bx^2)$, resulting in $\mathcal{O}(n_\bx^6) = \mathcal{O}(N^6 n_\bx^{i,6})$ complexity. This operation has to be completed for each of the $N$ agents such that the complexity increases to $\mathcal{O}(N^7 n_\bx^{i,6})$. The term $W_{\bs}^{(j),\top} V^i_{\belief \belief} W_{\bs}^{(j)}$ in \eqref{eq:q_vals4} must be computed $n_\bx$ times, but can be evaluated in $\mathcal{O}(n_\bx^5)$, since $W$ only contains $n_\bx$ non-zero elements. See \eqref{eq:W_noise} for the definition of $W$.
We solve the quadratic game at each stage by finding the inverse of the $n_\bu \times n_\bu$ matrix $\hat{Q}_{\bu\bu}$ in \eqref{eq:quadra_solution}, which has complexity $\mathcal{O}(n_\bx^3)$. Therefore, $\mathcal{O}(N^7 n_\bx^{i,6})$ remains the dominant runtime complexity.

Next, we investigate the complexity of evaluating derivatives, Hessians, and Jacobians. 
The cost Hessian $c^i_{\bs\bs}$ only contains $\mathcal{O}(n_\bx^4)$ elements.
Automatic differentiation through source code transformation yields $\mathcal{O}(1)$ complexity for each element, such that the cost Hessian term has no significant impact on the overall runtime complexity. The \ac{EKF} belief dynamics can be evaluated in $\mathcal{O}(n_\bx^3)$, such that linearizing the belief dynamics to obtain $W_\bs$ and $g_\bs$, both with $\mathcal{O}(n_\bx^2)$ entries, results in $\mathcal{O}(n_\bx^5)$.

Thus, we find the dominant runtime complexity in a single backward step is $\mathcal{O}(N^7 n_\bx^{i,6})$, and a final runtime complexity of $\mathcal{O}(l N^7 n_\bx^{i,6})$ in a full iteration of Algorithm~\ref{alg:nash_alg}.

\section{Case Studies}
\label{sec:case-studies}

We demonstrate the performance and flexibility of our algorithm in three case studies that combine the information-seeking behavior with our game-theoretic formulation. These case studies examine how the agents interact in the game, gain information, and use the information gain to improve their control policies. We choose these illustrative examples due to their variations in agent interactions and demonstration of broader capabilities. Each of the case studies employs a different dynamics and observation model as well as distinct objectives for the agents. We find the Nash equilibrium to each of these games through Algorithm~\ref{alg:nash_alg}.

\subsection{Active Surveillance}
In this case study, Agents 1 and 2 are in an environment with variable lighting conditions. Agent 1 is tasked with observing Agent 2, but the quality of the observations depends on the available lighting at the location in the environment. Agent 2 has no goal but is assigned the objective of maintaining a constant velocity while avoiding Agent 1. In the provided examples, the agents do not directly exchange information. Instead, they perceive themselves and the other agent only through observations. At the time of initialization, neither agent has perfect information of the other but only a noisy state estimate defining the initial belief $\belief_0$. Using our approach, we show that Agent 1 can successfully herd Agent 2 into the lighted region to achieve its surveillance objective, which would not be possible without incorporating the belief-space planning into the dynamic game. Figures \ref{fig:active_surveillance1} and \ref{fig:active_surveillance2} show the planned trajectories in two environments. 
Our case study goes beyond the commonly studied multi-robot herding problem~\cite{Lien2005, pierson2015bio, Pierson2018tro, Strombom2014} which has the goal of herding agents into a specified location. In contrast, our goal is to reduce uncertainty in the final state of another agent, which happens to coincide with pushing the other agent into the light. Algorithms commonly applied to the herding problem are not applicable here as they do not reason about the uncertainty of other agents.

The state of both car-like robots $\bx^{(i)} = [x^{(i)}, y^{(i)}, \theta^{(i)}, v^{(i)}]$ consists of their position $(x,y)$, orientation $\theta$, and speed $v$. The control inputs $\bu^{(i)}= [u^{(i)}_{\text{acc},k},u^{(i)}_{\text{steer},k}]$ are acceleration $u_{\text{acc},k}$ and steering wheel angle $u_{\text{steer},k}$. 
The deterministic continuous dynamics of both agents are given by
\begin{equation}
    \dot{\bx}^{(i)}_k = \big[v^{(i)}_k \cos\theta^{(i)}_k,~ v^{(i)}_k \sin\theta^{(i)}_k, u^{(i)}_{\text{acc},k}, \frac{ v_k^{(i)}}{L \tan(u^{(i)}_{\text{steer},k})}  \big]^\top , \nonumber
\end{equation}
where L is the length of the robots. The discrete time dynamics are defined by
\begin{equation}
    \bx_{k+1} = f(\bx_k, \bu_k, \bm_k) = \bx_{k} + \dot{\bx}_k\tau + M(\bu_k) \cdot \bm_k, \nonumber
\end{equation}
with timestep $\tau$. $M(\bu_k)$ scales the motion noise $\bm_k$ proportional to the control input $\bu_k$, such that uncertainty increases if excessive controls are executed. We encode the agent's objective and goals in this game by defining the current and terminal costs for Agent~1 and Agent~2 as
\begin{align}
    c_k^{(1)}(\belief_k, \bu_k) & = \bu_k^{(1),\top} R \bu^{(1)}_k, \nonumber \\
    c_l^{(1)}(\belief_l) & = \text{det}(\Sigma^{(2)}_{x,y,l}), \nonumber \\
    c_k^{(2)}(\belief_k, \bu_k) & = {\bu_k^{(2),\top}} R \bu^{(2)}_k + a_1 (v_k^{(2)} - v^{(2)}_{k,\text{des}})^2 + a_2 c_\text{coll}(\bx_k),\nonumber\\
    c_l^{(2)}(\belief_l) & = a_1(v_l^{(2)} - v^{(2)}_{l,\text{des}})^2 + a_2 c_\text{coll}(\bx_l). \nonumber
\end{align}
Agent 1's overall objective is to lower the uncertainty about the position of Agent 2 at the end of the planning horizon, encoded by $c_l^{(1)}(\belief_l)$. The term $\text{det}(\Sigma^{(2)}_{x,y})$ is equivalent to the area of the $1\sigma$-threshold ellipse of Agent 2 and representative of the location uncertainty of Agent 2 at the end of the planning horizon. Note that both agents penalize control effort by $\bu_k^{(i),\top} R \bu^{(i)}_k$, and Agent 2 has additional objectives for maintaining a desired velocity $v_\text{des}$ and avoiding collisions via an exponential barrier $c_\text{coll}(\bx_k) = \exp(-d(\bx_k))$. Here $d(\bx_k)$ is the expected euclidean distance until collision between the two agents, taking their outline into account. 

We restrict the robots' sensing abilities to only include noisy position measurements. The observation model varies across the environment based on the available light at a particular location,
\begin{equation}
    \bz^{(i)}_k = h(\bx^{(i)}_k, \bn^{(i)}_k) = [x^{(i)}_k, y^{(i)}_k]^T + N(\bx_k^{(i)}) \cdot \bn^{(i)}_k, \nonumber
\end{equation}
where the matrix $N(\bx_k^{(i)})$ scales the measurement noise based on the current position $(x,y)$ in the map.
We show the nominal trajectories and the associated beliefs of the solution computed using Algorithm \ref{alg:nash_alg} in Figures \ref{fig:active_surveillance1} and \ref{fig:active_surveillance2}. In both cases Agent 1 (blue) is able to force Agent~2 into the light to successfully reduce uncertainty. The emergent behavior would not have been possible without belief-space planning, reasoning about another agent's uncertainty, and without the dynamic game, estimating how the own actions influence another agent's actions.
We show the resulting behavior without belief-space planning and without any reasoning about Agent~2's uncertainty in the inset of Figure~\ref{fig:active_surveillance2}.

\begin{figure}
    \centering
    \includegraphics[width=1.0\columnwidth]{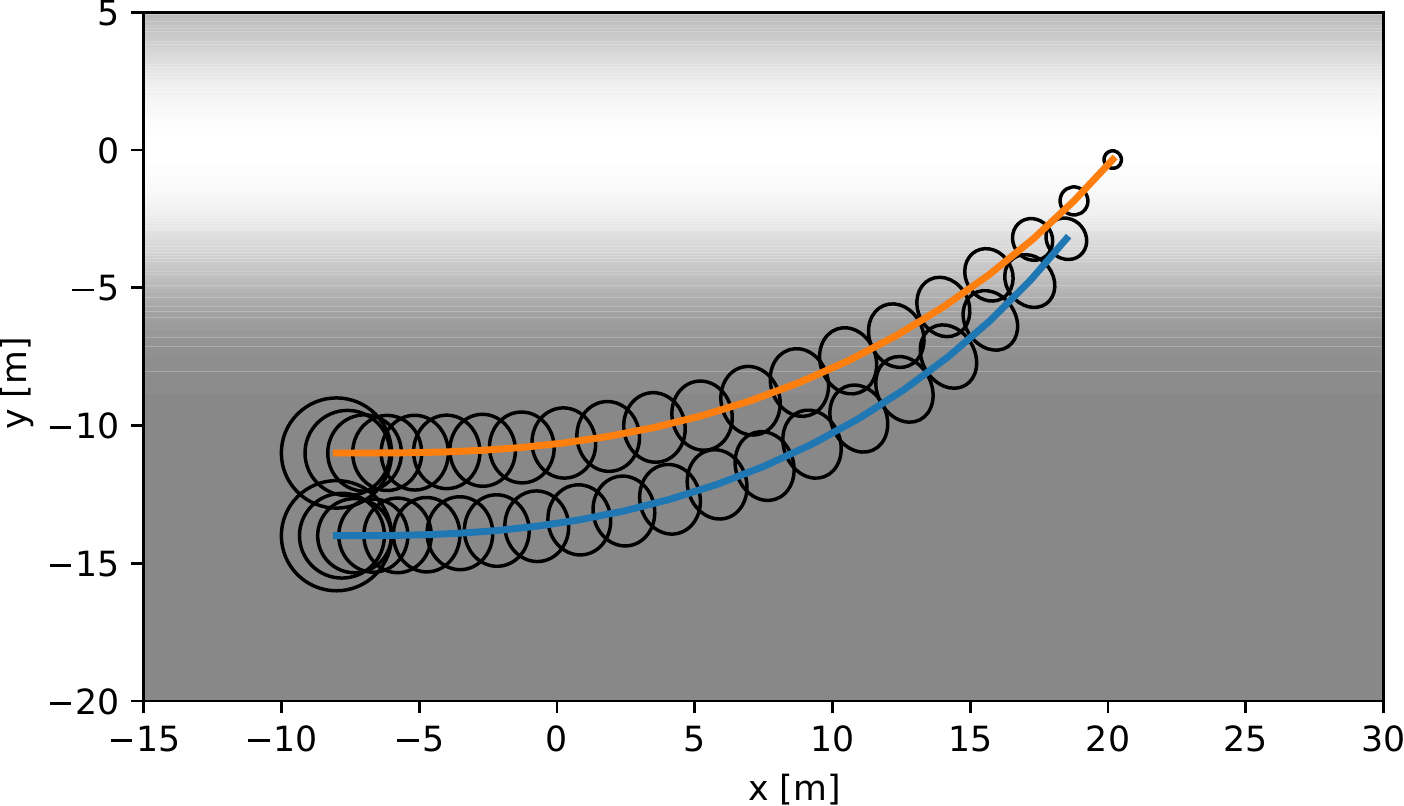}
    \caption{Agent 1 (blue) is pushing Agent 2 (orange) into the light to reduce the uncertainty over Agent 2 at the end of the planning horizon. Uncertainties are visualized by covariance ellipses. Both agents are initialized with positive velocity in the x-direction.}
    \label{fig:active_surveillance1}
\end{figure}

\begin{figure}
    \centering
    \includegraphics[width=1.0\columnwidth]{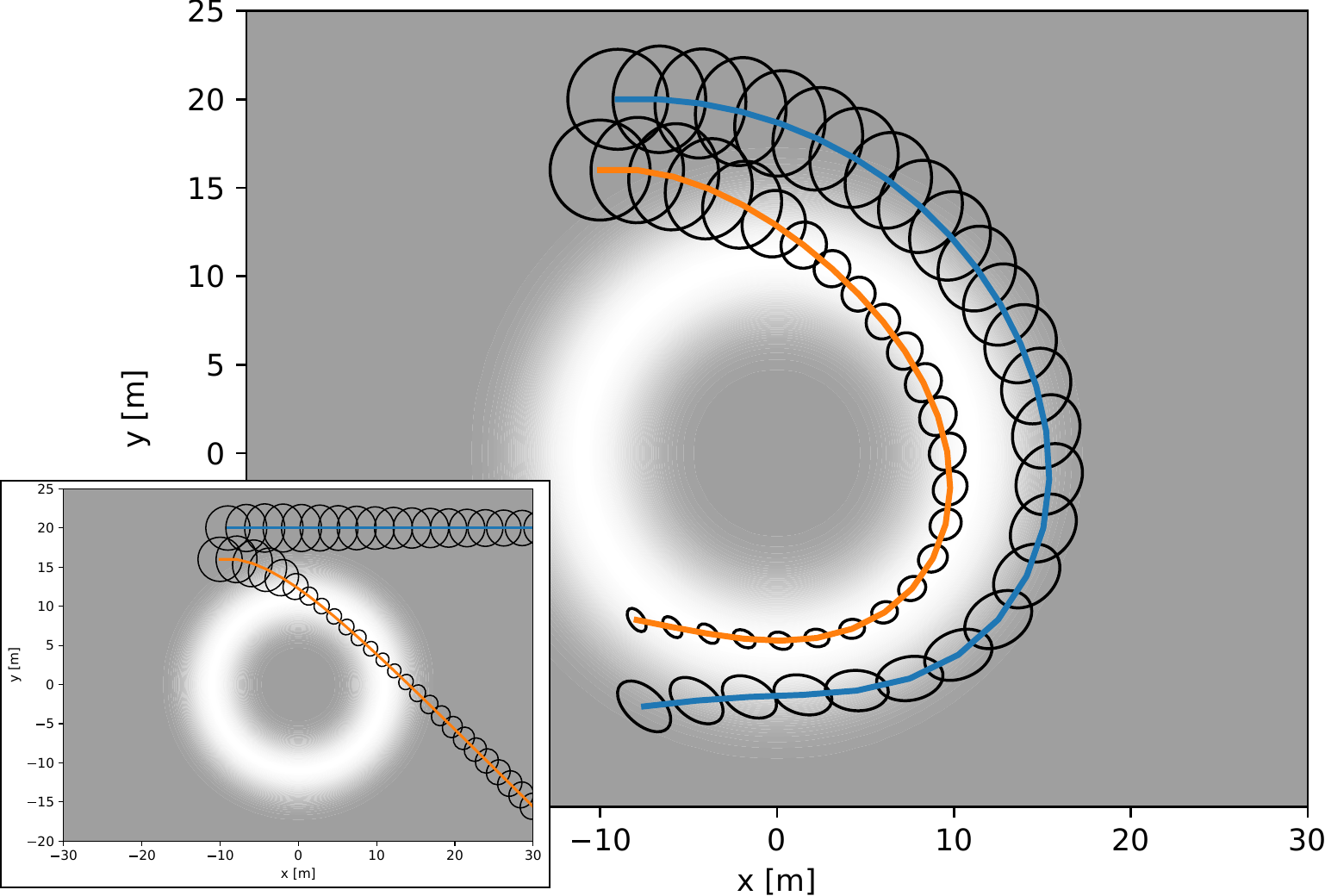}
    \caption{By nudging Agent 2 onto the circular light source Agent 1 is able to reduce the uncertainty over Agent 1's state at the end of the planning horizon. The lower left inset shows the same scenario without any information gain. As a result, Agent 1 has no incentive to manipulate Agent 2's behavior since there is no way to influence its uncertainty. Both agents start with positive velocity in the x-direction.}
     \label{fig:active_surveillance2}
\end{figure}

\begin{figure}
    \centering
    \includegraphics[width=1.0\columnwidth]{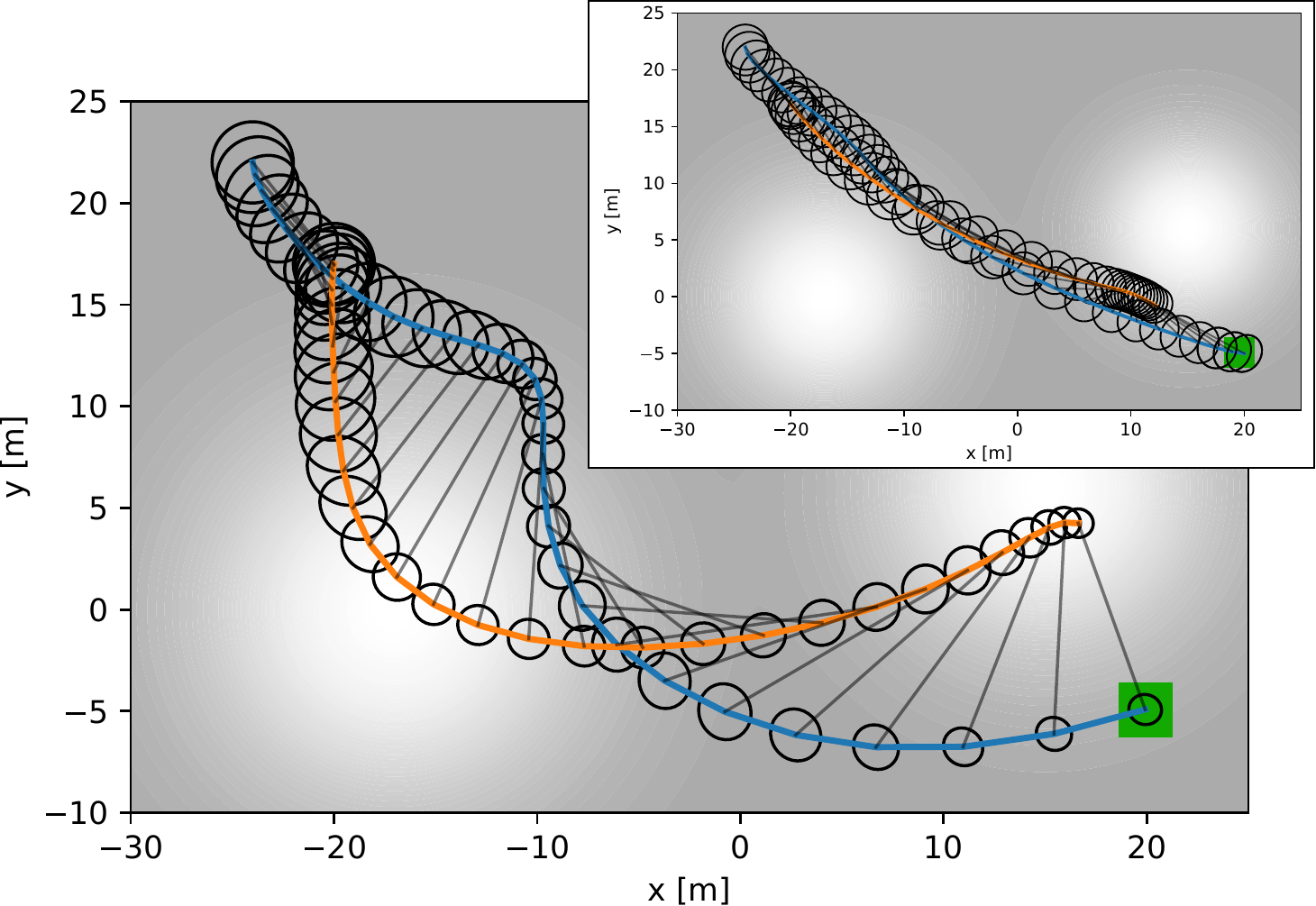}
    \caption{Guide dog (orange) with leash (black line) guides the blind agent (blue) towards the goal location (green). While doing so it passes by both light sources to reduce the uncertainty of the blind person's position at the goal location. The top right inset shows the case where the guide dog is indifferent about the blind person's uncertainty.}
    \label{fig:guide dog}
\end{figure}

\subsection{Guide Dog for Blind Agent}
In this scenario, Agent 2 guides Agent 1 towards a goal location while choosing a path that reduces the uncertainty in Agent 1's position. The game is won if Agent 1 knows it reaches the goal location with a low uncertainty about its state. However, Agent 1 does not have the ability to navigate itself. We refer to Agent 1 as the ``blind'' agent. Following this analogy, Agent 2 acts as the ``guide dog'' for the blind agent. The guide dog can gather information about its own state and the blind agent's state by passing through light sources in the environment which reduces uncertainty. The agents are tethered together, which we model with spring dynamics and refer to the tether as the ``leash.'' If the guide takes the blind agent on the direct path to the goal, the guide would not have sufficient information to know it brought the blind agent  to the goal location. Under our approach, the guide dog detours to key areas to reduce the blind agent's uncertainty. We use the analogy of a guide dog leading a blind agent to create an intuitive visual for the reader, however, this system is relevant to many other robotic applications.

We model the system dynamics as two masses on a surface with friction connected by a spring tether. The states of the blind agent and the guide dog are $\bx^{(i)} = [\mathbf{r}^{(i)}, \mathbf{v}^{(i)}]$ the 2D position $\mathbf{r}$ and velocities $\mathbf{v}$. The inputs $\bu^{(i)} = F^{(i)}$ are their respective force vectors. The blind agent and guide dog have masses $c_\text{mass,h}$ and $c_\text{mass,d}$ respectively and are bound to friction coefficients $c_\text{fric,h}$ and $c_\text{fric,d}$. The accelerations are
\begin{align}
         \mathbf{a}^{(1)}  &= 1/c_\text{mass,h}(\bu^{(1)} - f_\text{spring}(\Delta \mathbf{r}) - c_\text{fric,h}\mathbf{v}^{(1)}), \nonumber \\
         \mathbf{a}^{(2)}  &= 1/c_\text{mass,d}(\bu^{(2)} + f_\text{spring}(\Delta \mathbf{r}) - c_\text{fric,d}\mathbf{v}^{(2)}), \nonumber
\end{align}
and influenced by the spring force 
\begin{equation}
    f_\text{spring}(\Delta {r}) = \frac{\Delta \mathbf{r} }{||\Delta \mathbf{r}|| }  c_\text{spring} \max(||\Delta \mathbf{r}|| - c_\text{leash}, 0), \nonumber
\end{equation}
which is dependent on the distance vector $\Delta(\mathbf{r}) = [\mathbf{r}^{(1)}-\mathbf{r}^{(2)}]$.
The dog's leash is flexible with spring constant $c_\text{spring}$ and has length $c_\text{leash}$, such that it only generates a spring force if extended beyond $c_\text{leash}$ and is otherwise slack. The deterministic continuous dynamics are $\dot{\bx}^{(i)} = [\mathbf{v}^{(i),\top}, \mathbf{a}^{(i),\top}]^\top$, and the discrete time dynamics 
\begin{equation}
    f(\bx_k, \bu_k, \bm_k) = \bx_{k} + \dot{\bx}_k\tau + M(\bu_k) \cdot \bm_k, \nonumber
\end{equation}
for timestep $\tau$ and where $M(\bu_k)$ scales the motion noise proportional to the inputs $\bu_k$.

We use the cost functions to encode the behaviors and objects of each agent. Similar to the previous case study, minimizing $\text{det}(\Sigma^{(1)}_{\mathbf{r},l})$ reduces the uncertainty at the end of the planning horizon. We define
\begin{align}
    c_k^{(1)} (\belief_k, \bu_k)& = \bu_k^{(1),\top} R \bu^{(1)}_k + c_\text{acc,h}\mathbf{a}_k^{(1),\top}\mathbf{a}_k^{(1)},  \nonumber \\
    c_l^{(1)}(\belief_l) & =  0, \nonumber \\
    c_k^{(2)}(\belief_k, \bu_k) & = \bu_k^{(2),\top} R \bu^{(2)}_k, \nonumber \\
    c_l^{(2)}(\belief_l) & = \text{det}(\Sigma^{(1)}_{\mathbf{r},l}) + ||\mathbf{r}_l^{(1)} - \mathbf{r}_\text{goal}||^2. \nonumber
\end{align}
Here, the term $||\mathbf{r}_l^{(1)} - \mathbf{r}_\text{goal}||^2$ drives the guide dog to relocate the blind agent to the goal. We reduce the control efforts of each agent by $\bu_k^{(i),\top} R \bu^{(i)}_k $, and the blind agent has the additional objective of reducing accelerations with  $c_\text{acc,h}\mathbf{a}_k^{(1),\top}\mathbf{a}_k^{(1)}$. We use a noisy observation model
\begin{equation}
    \bz^{(i)}_k = h(\bx^{(i)}_k, \bn^{(i)}_k) = \bx^{(i)}_k + N(\bx_k^{(i)}) \cdot \bn^{(i)}_k,
\end{equation}
where the matrix $N(\bx_k^{(i)})$ scales the measurement noise based on the environment shown in Figure~\ref{fig:guide dog}.

The resulting behavior is shown in Figure~\ref{fig:guide dog}: The guide dog (orange) guides the blind agent (blue) from its initial position to the blind person's goal location (green) while reducing the uncertainty of the blind agent's final state by planning a slight detour through the light sources instead of directly towards the goal location. The guide does so while also taking the complex interaction originating from the blind person's forces on the tether into account.
The inset on Figure \ref{fig:guide dog} is the path taken by the dog with no optimization over the blind agent's uncertainty. While it takes a direct path to the goal, the final uncertainty of the blind agent is large. 

\subsection{Autonomous Racing}
\label{sec:autonomous_racing}
 
Finally, we demonstrate our approach in competitive racing, a common problem in dynamic games. By incorporating belief-space planning into the dynamic game formulation, we show a significant increase in racing performance. This allows the agents to reduce uncertainty and decrease chance constraints. Thus, maneuvers like overtaking on tight road segments become possible. 

In all racing runs each agent maintains a separate instance of Algorithm~\ref{alg:nash_alg}. This means that each agent separately computes their own optimal control actions, the predictions of other respective agents, and their own Nash equilibrium. No other additional information, such as state estimates, beliefs, policies, or initializations are shared among agents.
Since each agent executes a separate instance of Algorithm~\ref{alg:nash_alg}, Assumption~\ref{assum:FirstOrder} may not be accurate, i.e. the belief computed by agent $j$ over agent $i$ may only inaccurately resemble the belief of agent $i$ over itself. Nonetheless, we will show that despite a first-order belief assumption, the presented approach yields superior performance to all other baselines.

Each agent's state $\bx^{(i)} = [x^{(i)}, y^{(i)}, \theta^{(i)}, v^{(i)}]$ and controls $\bu^{(i)}= [u^{(i)}_{\text{acc},k},u^{(i)}_{\text{steer},k}]$ are the same as in the active surveillance experiment but the different deterministic continuous dynamics are of the from
\begin{multline}
        \dot{\bx}^{(i)}_k = \big[v^{(i)}_k \cos(\theta^{(i)}_k), v^{(i)}_k \sin(\theta^{(i)}_k),\\ u^{(i)}_{\text{acc},k} - c_\text{drag,i}v_k^{(i)} - c_\text{slip,i}(\dot{\theta}^{(i)})^2,\dot{\theta}^{(i)}\big]^\top , \nonumber
\end{multline}
with yaw rate $\dot{\theta}^{(i)} = v_k^{(i)}/L \tan(u^{(i)}_{\text{steer},k}) $, and drag-  $c_\text{drag,i}$ and slip coefficient $c_\text{slip,i}$.
The stochastic discrete time dynamics,
\begin{equation}
    \bx_{k+1} = f(\bx_k, \bu_k, \bm_k) = \bx_{k} + \dot{\bx}_k\tau + M(\belief_k, \bu_k) \cdot \bm_k, \nonumber
\end{equation}
are subject to noise scaled by $M(\belief_k,\bu_k)$ proportional to the control input $\bu_k$ as well as the squared yaw rate $(\dot{\theta}^{(i)})^2$ of each agent $i$ separately. The observation model 
\begin{equation}
    \bz^{(i)}_k = h(\bx^{(i)}_k, \bn^{(i)}_k) = \bx^{(i)}_k + N(\bx_k^{(i)})\cdot \bn^{(i)}_k, \nonumber
\end{equation}
is subject to noise scaled by $N(\bx_k^{(i)})$, depending on the position on the race track map. As shown in Figure \ref{fig:race_big}, we indicate zones of low measurement noise as red. It may be beneficial for agents to plan to drive through these low measurement noise regions to increase information gain and to reduce uncertainty.

\begin{figure}
\centering
        \includegraphics[width=0.95\columnwidth]{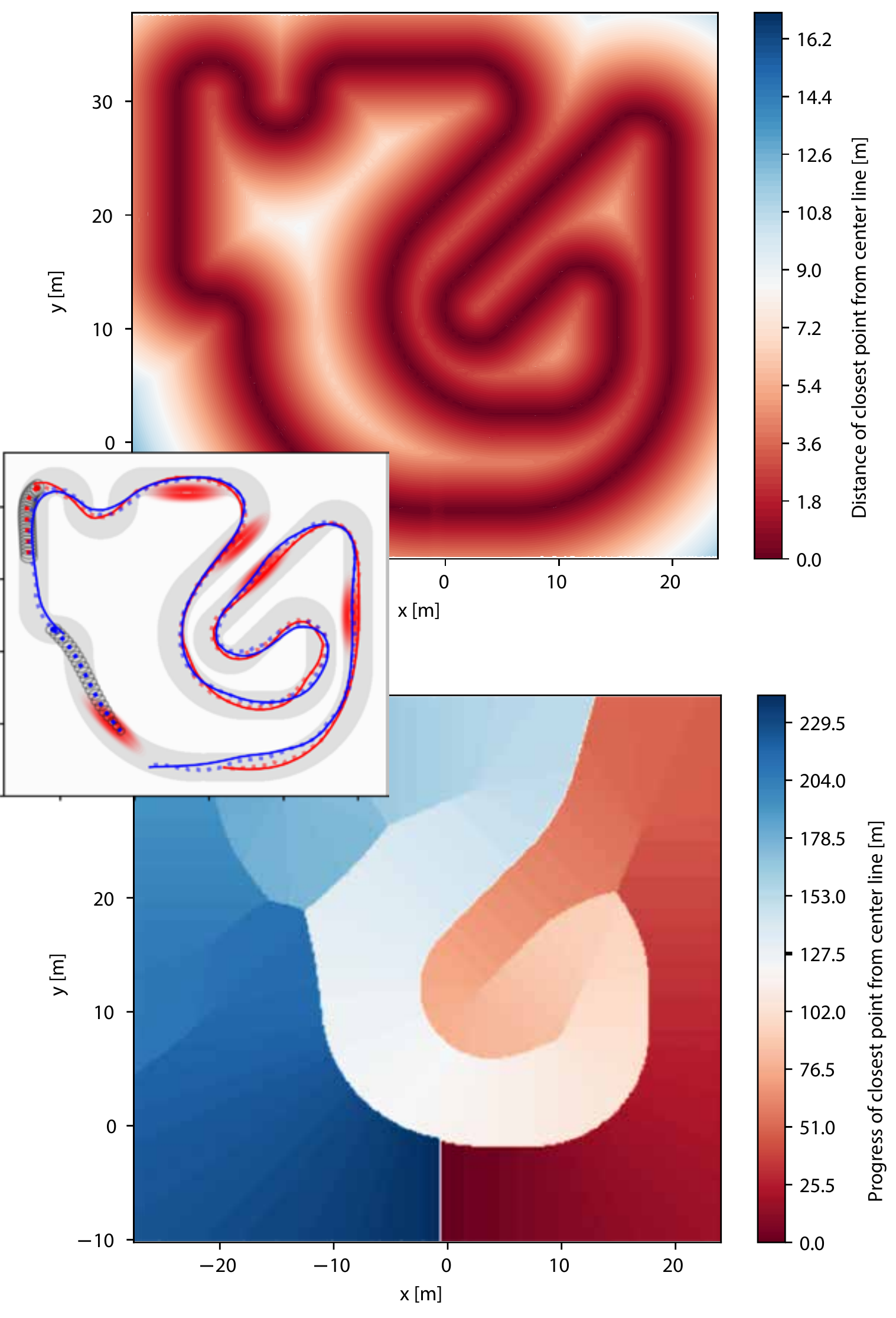}
    \caption{\textbf{Top: (Distance Transform)} Map of the distances to the closest point on the center line $d(\mathbf{p})$ of the race track shown in Figure~\ref{fig:race_big}. \textbf{Bottom: (Progress Transform)} Map of the progress $r(\mathbf{p})$ along the race track of the closest point on the center line.}
    \label{fig:dist_prog_race}
\end{figure}

Each agent's goal is to maximize progress along the race track while staying on the track and not colliding with other agents. We define the progress along the track for any point $\mathbf{p}=(x,y)$ as the arc-length progress $r(\mathbf{p})$ of the closest point on the centerline. Likewise, we define $d(\mathbf{p})$ as the distance of the closest point on the track to $\mathbf{p}$. We visualize both the distance transform as well as the progress transform of the race track shown in Figure~\ref{fig:race_big} and Figure~\ref{fig:dist_prog_race}. For competitive racing, each agent tries to maximize the relative progress over other agents $r(\mathbf{p}^{(i)}) - r(\mathbf{p}^{(\neg i)})$. Consequently, agents will engage in competitive blocking and cutting behavior. We design the current and terminal costs of each agent as
\begin{align}
    c_k^{(i)}(\belief_k, \bu_k) & = \bu_k^{(i),\top} R \bu^{(i)}_k  \nonumber
    + c^{(i)}_\text{track}(\belief_k)  
    + c^{(i)}_\text{coll}(\belief_k), \\
    c_l^{(i)}(\belief_l) & =  -r(p^{(i)}_l) + r(p^{(\neg i)}_l), \nonumber
\end{align}
penalizing control effort by $R$, while $c^{(i)}_\text{track}(\belief_k)$ and $c^{(i)}_\text{coll}(\belief_k)$ keep the agent on the track and out of collision. We achieve this by finding the upper bound of the $2\sigma$ positional uncertainty $\Sigma_{x,y}^{(i)}$ as $\alpha = 2\sqrt{\max(\text{eig}(\Sigma_{x,y}^{(i))}))}$. We can then formulate a chance collision constraint with other agents (limiting $||\mathbf{p}^{(i)}-\mathbf{p}^{(j)}||$) and the boundary of the race track (limiting $d(\mathbf{p})$) by restricting positions in the $\alpha$ vicinity.
Finally, to arrive at $c^{(i)}_\text{track}(\belief_k)$ and $c^{(i)}_\text{coll}(\belief_k)$ we convert the constraints to soft constraints, penalizing constraint violation exponentially strong, as suggested in \cite{van2012motion}. Additionally, we also limit control inputs $\bu_k$ by soft constraints.

\begin{figure}
    \centering
        \includegraphics[width=0.8\columnwidth]{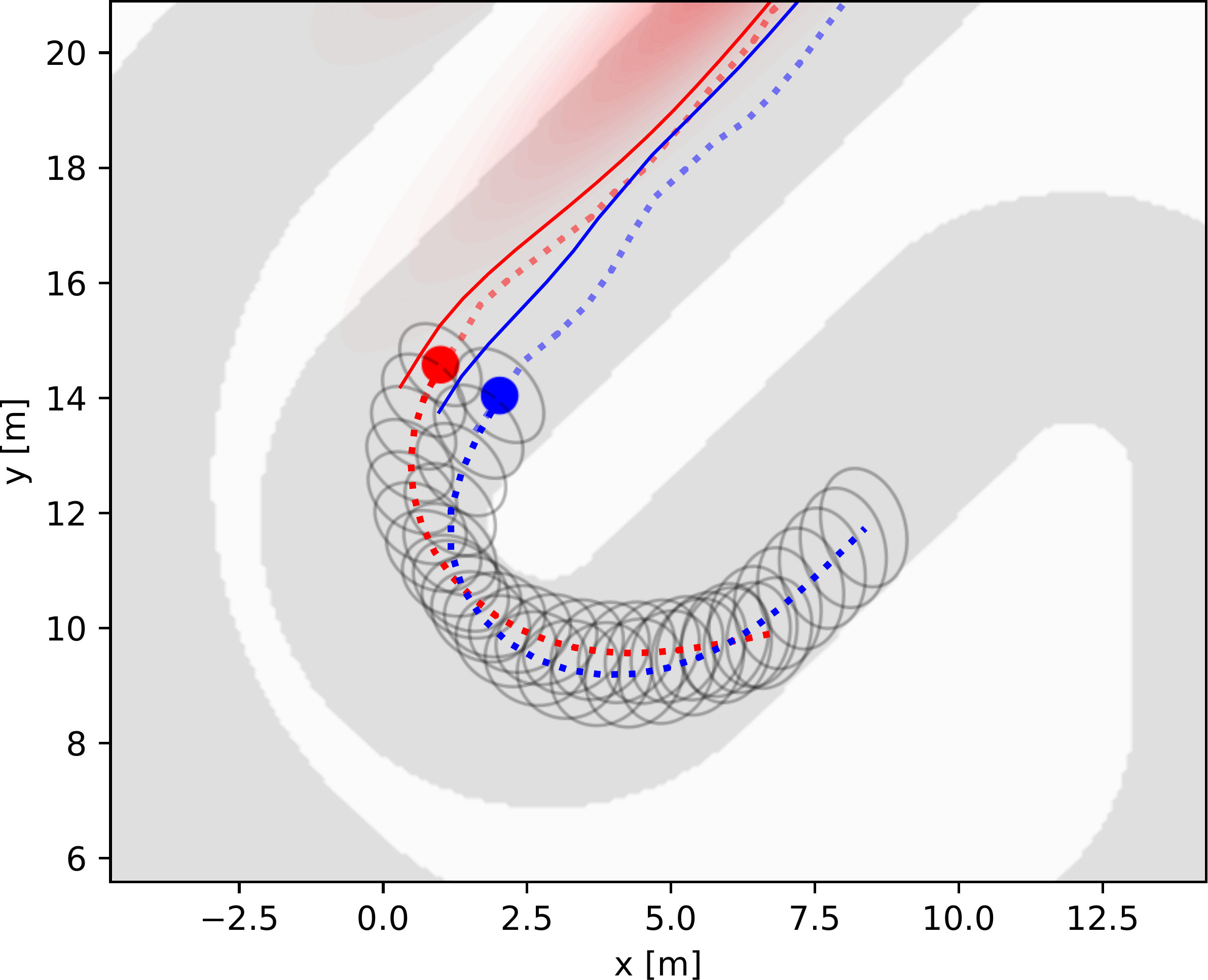}
        \includegraphics[width=0.8\columnwidth]{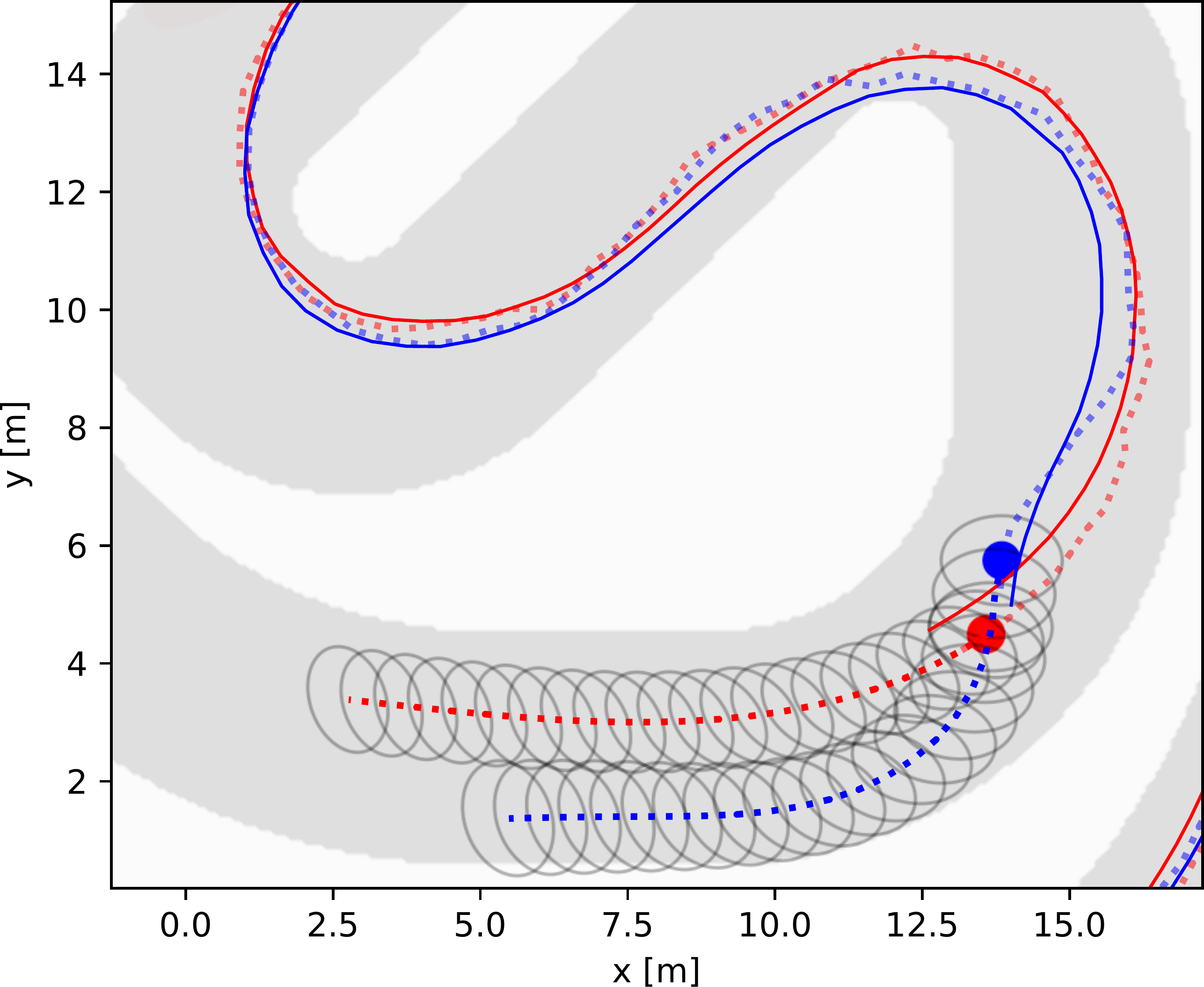}
    \caption{\textbf{Top:} Blue agent cuts in front of the red agent, forcing the red agent to break. As a result, the blue agent can remain in front of the red agent at the end of the turn.  \textbf{Bottom:} The red agent blocks the blue agent's overtaking maneuver forcing the blue agent to stay behind and take a wider line in the upcoming right turn. Significant amount of noise is simulated visualized by the deviation of the true trajectory (solid lines) and the predicted mean of the belief (dashed lines).}
    \label{fig:cutting_blocking}
\end{figure}

\subsubsection{Competitive Racing}
In our racing simulation, each car executes the current commanded control computed by their own separate instance of Algorithm~\ref{alg:nash_alg}. The environment's dynamics are propagated forward subject to significant amount of noise. Subsequently, a noisy observation is generated to simulate measurement uncertainty and the current belief is updated by an EKF step.
Each agent runs an individual and independent EKF and maintains their own separate belief over themselves and others. Each agent receives noisy measurements with noise drawn independently from other agents. Agents do not share any information, such as policies, measurements, initializations, state estimates, or beliefs, during online operation. To test robustness, we simulate substantial amounts of noise, such that the belief $\belief$ may significantly deviate from the true state of the system $\bx$, shown in Figure~\ref{fig:race_big} and Figure~\ref{fig:cutting_blocking}.

We encourage interaction by starting one agent with lower drag coefficient (and therefore higher speed) behind another \emph{slower} agent. The \emph{faster} agent will eventually catch up to the previous agent and initiate an overtaking maneuver. The better interactions are predicted and integrated into planning, the more successful overtaking maneuvers will occur.

The algorithm described in this paper is able to synthesize competitive emergent behavior such as blocking of other vehicles and cutting in front of others, illustrated in Figure~\ref{fig:cutting_blocking}. Additionally, although tight racing lines cut corners very closely, the chance constraints are successful in prohibiting collisions under the presence of motion and observation noise.

\begin{figure}
    \centering
        \includegraphics[width=0.9\columnwidth]{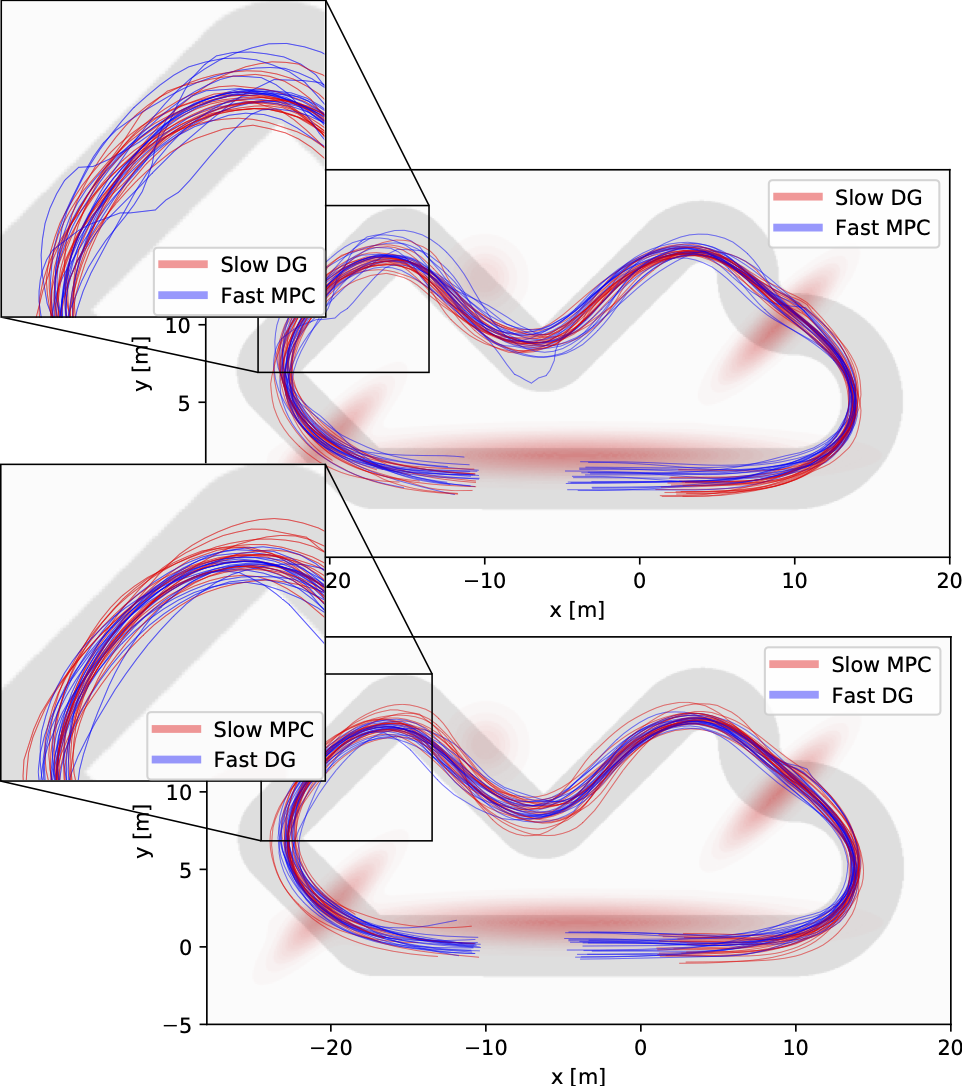}  
        \vspace{0.4cm}
		\includegraphics[width=\columnwidth]{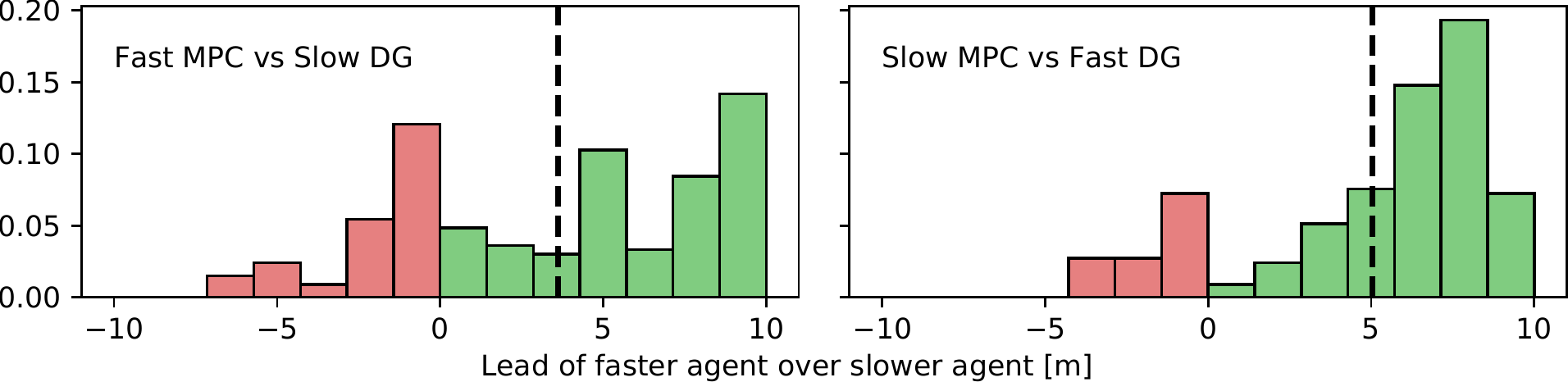}
    \caption{\textbf{Top:}
    Traces of agents comparing MPC and DG. In both cases the MPC method moves away from the ideal racing line more often due to failed overtaking attempts. It can not foresee it's influence on the DG agent's actions and thus is less efficient. It is also not able to take advantage of estimating is implicit control over the other agent like the DG agent. The agents start from random initialization locations around the origin. \textbf{Bottom:} Histograms of the $\Delta$arc-length lead of the faster agent over the slower agent. Green indicates that the faster agent won the race against the agent starting in the lead, whereas red indicates the opposite. In comparison, the DG method won more races than the MPC method and had a higher average lead.}
    \label{fig:DG_vs_MPC}
\end{figure}

\subsubsection{Benefits of Dynamic Game Planning}
We compare the performance of Dynamic Game (DG) planning to conventional methods such as Model Predictive Control (MPC). Both DG and MPC agents plan in belief space. The MPC agent has the exact same cost structure, but observes the other agents' executed actions and predicts agents to continue with the same action. The MPC baseline therefore predicts agents to not react to changes of their own actions and  cannot leverage the effects of their own actions on other agents.
The MPC is capable of synthesizing competitive racing trajectories, shown in Figure~\ref{fig:DG_vs_MPC}, which are identical to the DG trajectories when no other agents are present. The performance of the DG planning distinguishes itself when interactions occur. 
\begin{table}
\centering
\caption{Racing Performance: DG vs MPC and BSP vs non-BSP }
\begin{tabular}{l ||c}
Competition Pair                              & Fraction of Fast winning \\ \hline
Fast DG BSP vs Slow MPC BSP     & \textbf{82\%}         \\
Fast MPC BSP vs Slow DG BSP     & 64\%         \\ \hline
Fast DG BSP vs Slow DG non-BSP & \textbf{77\%}         \\
Fast DG non-BSP vs Slow DG BSP & 63\%        
\\ \hline
Fast DG BSP vs Slow DG BSP &\textbf{67\%}       \\
\end{tabular}
\label{tab:DG_vs_MPC}
\end{table}

\begin{table}
\centering
\caption{Racing Performance: Winning Ratio}
\begin{tabular}{l ||c}
Competition Pair                              & Win ratio \\ \hline
DG BSP vs MPC BSP    & \textbf{1.44:1}      \\ \hline
DG BSP vs DG non-BSP & \textbf{1.33:1}         \\
\end{tabular}
\label{tab:win_ratio}
\end{table}

We display the results of 200 runs in Figure~\ref{fig:DG_vs_MPC}, Table~\ref{tab:DG_vs_MPC} and Table~\ref{tab:win_ratio}. The DG method wins $44\%$ more races relative to the MPC baseline and has a larger lead on average. These results clearly illustrate the competitive advantage of our game-theoretic algorithm from leveraging how others react to one's own actions when planning.
\begin{figure}
\centering
        \includegraphics[width=\columnwidth]{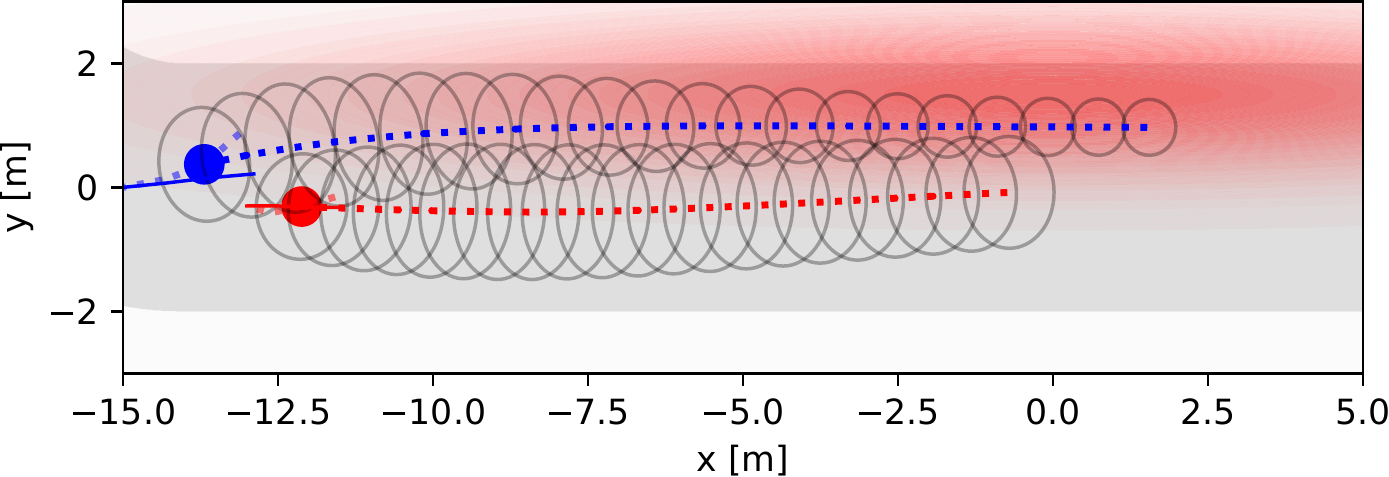}
        
\includegraphics[width=\columnwidth]{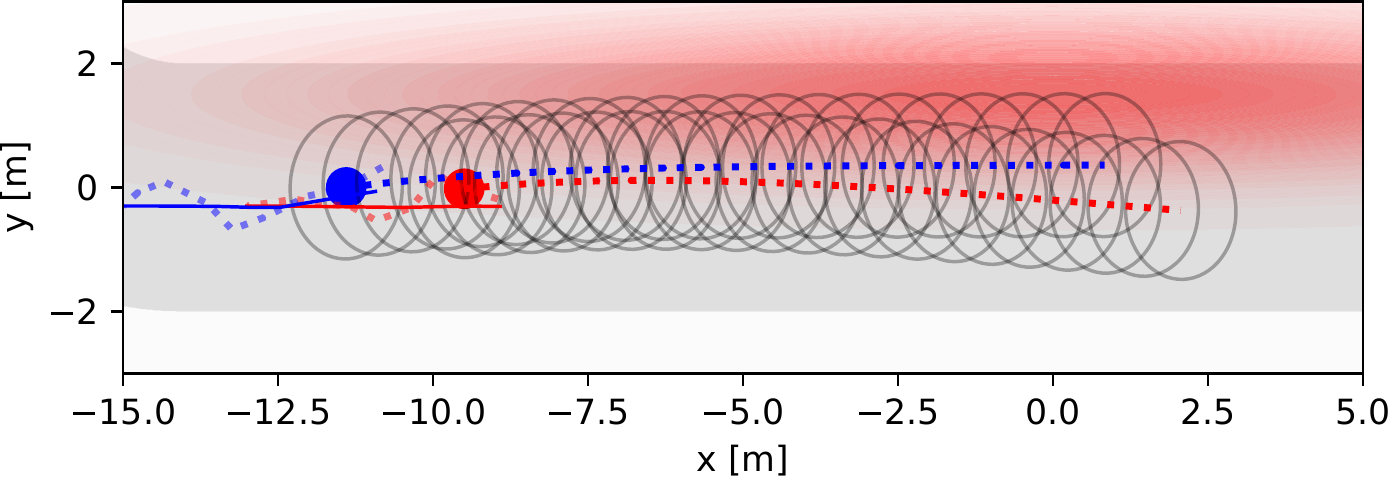}
    \caption{
    \textbf{Top:} The blue agent overtakes the red agent by decreasing the uncertainty through the low noise region and reducing the chance constraint (ellipses). \textbf{Bottom:} The blue agent has the same uncertainty over the planning horizon and fails to overtake since the chance constraints remain large.
    }
    \label{fig:overtake_info_gain}
\end{figure}

\subsubsection{Benefits of Belief-Space Planning}
We also compare the performance of DG planning with and without Belief-Space Planning (BSP). In the non-BSP case the current uncertainty $\Sigma_0$ of the belief $\belief_0$ is held constant over the planning horizon and is not influenced by expected measurements. Note that the current belief is still updated online by an EKF for both agents. Results are reported in Figure~\ref{fig:BS_vs_non_BSP} and Table~\ref{tab:DG_vs_MPC}. The BSP variant wins $33\%$ more races, has a larger average lead, and the fewest number of collisions. The non-BSP method collides nearly 10 times more often and exhibits behavior inappropriate for observed uncertainty levels. For example, agents are too conservative because low noise regions are not considered in the planning phase, or too aggressive when entering sharp turns since additional motion noise due to breaking and steering are not accounted for.

Figure~\ref{fig:overtake_info_gain} gives an intuitive explanation for the competitive advantage of planning in belief space. Without information gain, the follower will never be able to overtake due to the large chance constraint. Whereas with information gain, the chance constraint shrinks while moving through a low noise zone, allowing the blue agent to overtake the leading agent.
As shown in Figure~\ref{fig:BS_vs_non_BSP}, the BSP agent can adapt their trajectories to account for increased noise due to strong actuation, i.e. braking and steering, and gaining information in low noise regions.

Finally, we compare the performance when both agents use DG BSP, Table~\ref{tab:DG_vs_MPC}, and see that the faster agent wins 67\% of the races. Since the performance gain of the faster over the slower agent is smaller than in the previous two comparisons, we assume that the slower agent improves their blocking behavior more than the faster agent improves their ability to overtake. 
In scenarios where high uncertainty causes the chance constraints to occupy large parts of the track's width, the slower agent can often block the faster agent by proceeding in the middle of the road.

\begin{figure}
    \centering
    \includegraphics[width=0.95\columnwidth]{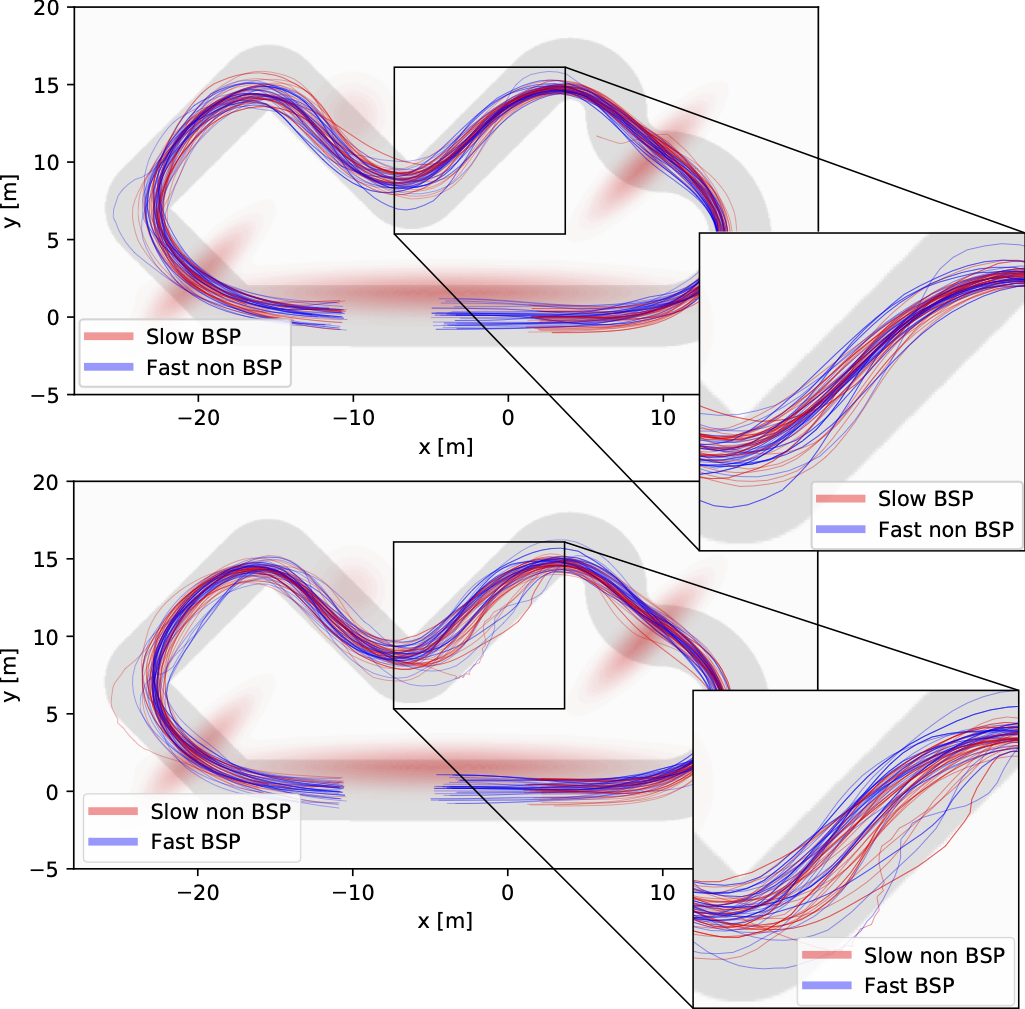}
    \vspace{0.4cm}
    \includegraphics[width=\columnwidth]{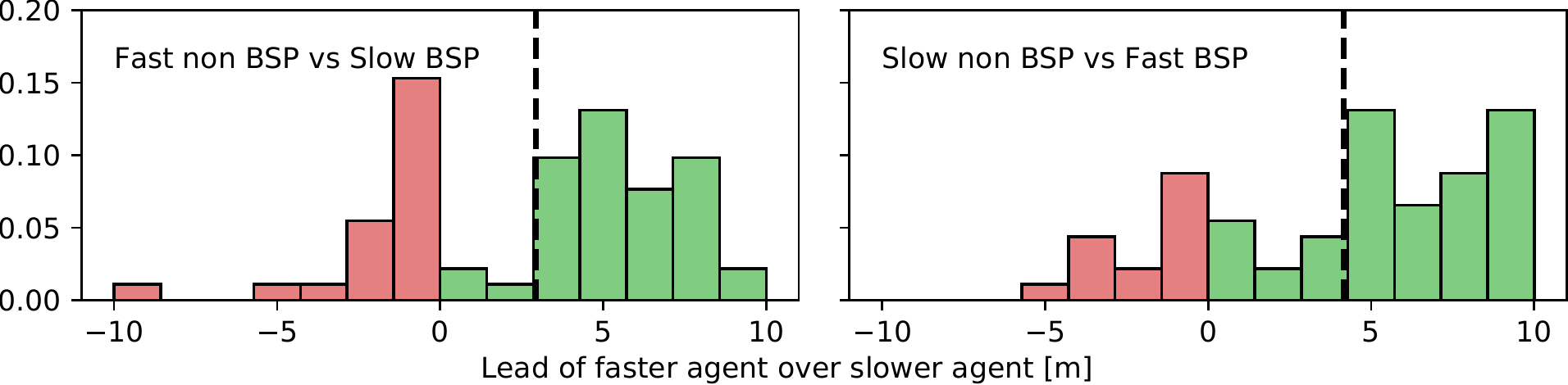}
    \caption{\textbf{Top:}
    Traces of agents comparing BSP and non-BSP. In both cases the non BSP method shows more unsafe behavior, leaving the track several times and nearly colliding with the other agent. The BSP agent attempts more aggressive overtaking maneuvers due to the lower uncertainty estimate over itself and the other agent, as shown in the cutout. The agents start from random initialization locations around the origin.
\textbf{Bottom:} Histograms of the $\Delta$arc-length  lead of the faster agent over the slower agent. Here, the BSP method won more races than the non-BSP method and had a higher average lead.}
    \label{fig:BS_vs_non_BSP}
\end{figure}

\subsection{Real-Time Implementation Details}
We implement our solver in the CasADi~\cite{Andersson2018} framework leveraging auto-differentiation by source code transformation, automatic problem specific compute graph generation, C-code generation, and sparse operations. Exploiting sparsity is highly important to allow for real-time performance since the belief space, encompassing the mean state and the upper triangle of the covariance matrix can make respective Jacobian and Hessian matrices very large. The average compute times on a Ryzen 7 1700X 3.4 GHz are reported in Table~\ref{tab:compute_time}. Algorithm 1 was run until convergence starting from a cold start for all experiments, i.e. the initial control trajectory $\bu$ consists of all zeros. Nonetheless, it is also possible to run the algorithm sequentially by hot starting the optimization with the previous solution. This is common practice in related optimization techniques for controls such as sequential quadratic programming~\cite{nocedal2006numerical} and allows to run Algorithm~\ref{alg:nash_alg} at 100-200Hz. In these cases it is often enough to run only very few iterations to update the previous solution.

\begin{table}[h]
\centering
\caption{Average computation time}
\begin{tabular}{l ||c  c  c}
Experiment          & Per iteration & Until convergence\\ \hline
Active surveillance & 9.3 ms         & 371.3 ms\\
Guide dog           & 11.2 ms        & 474.9 ms\\ 
Racing              & 5.8 ms         & 110.5 ms\\
\end{tabular}
\label{tab:compute_time}
\end{table}

\section{Conclusions} 
In this paper, we propose a formulation for integrating belief-space planning into dynamic games, and present a real-time algorithm for solving the local Nash equilibria of these dynamic games in belief space.
We demonstrate its performance of combining game-theoretic planning and information gathering with three case studies: active  surveillance, guiding blind agents, and racing with autonomous vehicles. 

While game-theoretic planning models the interaction and
dependency among agents, it does not address the quality of
information available to the agent for decision making.
Incorporating belief-space planning in dynamic games allows for new capabilities not possible with other approaches, essential in house service robots or interacting with human agents in traffic. Reasoning about another agent's uncertainty and simultaneously leveraging the effect of own actions on other agents' actions results in complex emergent behavior such as indirectly pushing and guiding others through regions of light, without the use of any form direct of communication.

In competitive use cases such as racing, emergent behavior consists of cutting, blocking, forcing others to break hard with the goal of increasing their uncertainty and slowing them down in turns, as well as the exploitation of high information-gain zones for overtaking.
In particular, game-theoretic belief-space planning significantly increased performance in dynamic racing when benchmarked against state-of-the-art planning methods.
Game-theoretic belief-space planning wins 44\% more races when competing with a non-game-theoretic baseline with belief-space planning and 34\% more races than a game-theoretic baseline without belief-space planning.

In this work we limit ourselves to first-order beliefs to avoid the explosion in parameters for recursive beliefs over beliefs. 
Nonetheless, even in cases where a first-order belief assumption is
a simplification of the true belief dynamics, such as racing, we see improved performance to baselines that do not take the belief over other agents into account.
In future work we intend to develop extensions beyond first-order belief spaces.

We efficiently solve for Nash equilibria in belief space and achieve real-time performance, operating our algorithm at more than 100Hz. Efficiently solving a quadratic game at each stage of the recursive backward pass of a belief-space variant of \ac{iLQG} results in an algorithm with runtime complexity $\mathcal{O}(l N^7 n_\bx^{i,6})$. Linear complexity in the planning horizon allows for online deployment, in comparison to point-based POMDP algorithms with exponential complexity. While our algorithm also achieves polynomial runtime complexity in the number of agents $N$, future work will investigate lowering the complexity further.

\ifCLASSOPTIONcaptionsoff
  \newpage
\fi

\balance
\bibliographystyle{IEEEtran}
\bibliography{references}

\end{document}